\newcolumntype{L}{>{\centering\arraybackslash}m{4cm}}
\newtheorem{thmprop}{Proposition}
\newtheorem{thmthm}{Theorem}
\newtheorem{thmasmp}{Assumption}
\newtheorem{thmapplem}{Lemma}
\theoremstyle{definition}
\newtheorem{thmdef}{Definition}
\newenvironment{thmproofsketch}[1][Proof sketch]{\begin{trivlist}
\item[\hskip \labelsep {\textit{#1.}}]}{\end{trivlist}}
\def\E{\mathbb{E}}
\def\cA{\mathcal{A}}
\def\cN{\mathcal{N}}
\def\cX{\mathcal X}
\def\olgamma{\overline{\gamma}}
\def\olGamma{\overline{\Gamma}}
\def\olB{\overline{B}}
\def\olV{\overline{V}}
\def\olN{\overline{N}}
\def\hY{\hat{Y}}
\def\hy{\hat{y}}
\def\mse{\mbox{MSE}}
\def\zo{\mbox{ZO}}
\def\fnr{\mbox{FNR}}
\def\fpr{\mbox{FPR}}
\def\scfnr{\mbox{{\sc fnr}}}
\def\scfpr{\mbox{{\sc fpr}}}
\def\sczo{\mbox{{\sc zo}}}
\def\scmse{\mbox{{\sc mse}}}
\def\ymain{\tilde{y}}
\DeclareMathOperator*{\argmin}{arg\,min}
\title{Why Is My Classifier Discriminatory?}
\author{
  Irene Y. Chen \\
  MIT\\
  \texttt{iychen@mit.edu} \\
  \And
  Fredrik D. Johansson \\
  MIT\\
  \texttt{fredrikj@mit.edu} \\
  \And
  David Sontag \\
  MIT \\
  \texttt{dsontag@csail.mit.edu} \\
}
\begin{document}

\maketitle

\begin{abstract}
Recent attempts to achieve fairness in predictive models focus on the balance between fairness and accuracy. In sensitive applications such as healthcare or criminal justice, this trade-off is often undesirable as any increase in prediction error could have devastating consequences. In this work, we argue that the fairness of predictions should be evaluated in context of the data, and that unfairness induced by inadequate samples sizes or unmeasured predictive variables should be addressed through data collection, rather than by constraining the model. We decompose cost-based metrics of discrimination into bias, variance, and noise, and propose actions aimed at estimating and reducing each term. Finally, we perform case-studies on prediction of income, mortality, and review ratings, confirming the value of this analysis. We find that data collection is often a means to reduce discrimination without sacrificing accuracy.
\end{abstract}

%
%
\section{Introduction}
As machine learning algorithms increasingly affect decision making in society, many have raised concerns about the fairness and biases of these algorithms, especially in applications to healthcare or criminal justice, where human lives are at stake~\citep{angwin2016machine,barocas2016big}. It is often hoped that the use of automatic decision support systems trained on observational data will remove human bias and improve accuracy. However, factors such as data quality and model choice may encode unintentional discrimination, resulting in systematic disparate impact.

We study fairness in prediction of outcomes such as recidivism, annual income, or patient mortality. Fairness is evaluated with respect to \emph{protected groups} of individuals defined by attributes such as gender or ethnicity~\citep{ruggieri2010data}. Following previous work, we measure discrimination in terms of differences in prediction cost across protected groups~\citep{calders2010three,dwork2012fairness, feldman2015certifying}. Correcting for issues of data provenance and historical bias in labels is outside of the scope of this work. Much research has been devoted to constraining models to satisfy cost-based fairness in prediction, as we expand on below. \emph{The impact of data collection on discrimination has received comparatively little attention}. 

Fairness in prediction has been encouraged by adjusting models through regularization~\citep{bechavod2017learning,kamishima2011fairness}, constraints~\citep{kamiran2010discrimination,zafar2017fairness}, and representation learning~\citep{zemel2013learning}. These attempts can be broadly categorized as model-based approaches to fairness. Others have applied data preprocessing to reduce discrimination~\citep{hajian2013methodology,feldman2015certifying,calmon2017optimized}. For an empirical comparison, see for example~\citet{friedler2018comparative}. Inevitably, however, restricting the model class or perturbing training data to improve fairness may harm predictive accuracy~\citep{corbett2017algorithmic}. 

A \emph{tradeoff} of predictive accuracy for fairness is sometimes difficult to motivate when predictions influence high-stakes decisions. In particular, post-hoc correction methods based on randomizing predictions~\citep{hardt2016equality,pleiss2017fairness} are unjustifiable for ethical reasons in clinical tasks such as severity scoring. Moreover, as pointed out by \citet{woodworth2017learning}, post-hoc correction may lead to suboptimal predictive accuracy compared to other equally fair classifiers. 

Disparate predictive accuracy can often be explained by insufficient or skewed sample sizes or inherent unpredictability of the outcome given the available set of variables. With this in mind, we propose that fairness of predictive models should be analyzed in terms of model bias, model variance, and outcome noise \emph{before} they are constrained to satisfy fairness criteria. This exposes and separates the adverse impact of inadequate data collection and the choice of the model on fairness. 
The cost of fairness need not always be one of predictive accuracy, but one of investment in data collection and model development. In high-stakes applications, the benefits often outweigh the costs.

In this work, we use the term ``discrimination" to refer to specific kinds of differences in the predictive power of models when applied to different protected groups. In some domains, such differences may not be considered discriminatory, and it is critical that decisions made based on this information are sensitive to this fact. For example, in prior work, researchers showed that causal inference may help uncover which sources of differences in predictive accuracy introduce unfairness \citep{kusner2017counterfactual}. In this work, we assume that observed differences are considered discriminatory and discuss various means of explaining and reducing them. 

\paragraph{Main contributions}
We give a procedure for analyzing  discrimination in predictive models with respect to cost-based definitions of group fairness, emphasizing the impact of data collection. First, we propose the use of bias-variance-noise decompositions for separating sources of discrimination. Second, we suggest procedures for estimating the value of collecting additional training samples. Finally, we propose the use of clustering for identifying subpopulations that are discriminated against to guide additional variable collection. We use these tools to analyze the fairness of common learning algorithms in three tasks: predicting income based on census data, predicting mortality of patients in critical care, and predicting book review ratings from text. We find that the accuracy in predictions of the mortality of cancer patients vary by as much as $20\%$ between protected groups. In addition, our experiments confirm that discrimination level is sensitive to the quality of the training data.

%
%
\section{Background} 
\label{sec:background}
We study fairness in prediction of an outcome $Y \in \mathcal{Y}$. Predictions are based on a set of covariates $X \in \mathcal{X} \subseteq \mathbb{R}^k$ and a \emph{protected attribute} $A \in \mathcal{A}$. In mortality prediction, $X$ represents the medical history of a patient in critical care, $A$ the self-reported ethnicity, and $Y$ mortality. A model is considered fair if its errors are distributed similarly across protected groups, as measured by a cost function $\gamma$. Predictions learned from a training set $d$ are denoted $\hat{Y}_d := h(X,A)$ for some $h : \mathcal{X}\times\mathcal{A} \rightarrow \mathcal{Y}$ from a class $\mathcal{H}$. The protected attribute is assumed to be binary, $\cA = \{0,1\}$, but our results generalize to the non-binary case. A dataset $d = \{(x_i, a_i, y_i)\}_{i=1}^n$ consists of $n$ samples distributed according to $p(X, A, Y)$. When clear from context, we drop the subscript from $\hat{Y}_d$. 

A popular cost-based definition of fairness is the \textit{equalized odds} criterion, which states that a binary classifier $\hat{Y}$ is fair if its false negative rates (FNR) and false positive rates (FPR) are equal across groups~\citep{hardt2016equality}. We define FPR and FNR with respect to protected group $a\in \cA$ by
\begin{align*}
\fpr_{a}(\hat{Y}) \coloneqq \mathbb{E}_X[\hat{Y} \mid  Y = 0, A=a], \;\;\;\; 
\fnr_{a}(\hat{Y}) \coloneqq \mathbb{E}_X[1-\hat{Y} \mid  Y = 1, A=a]~.
\end{align*}
Exact equality, $\fpr_{0}(\hat{Y}) = \fpr_{1}(\hat{Y})$, is often hard to verify or enforce in practice. Instead, we study the \emph{degree} to which such constraints are violated. More generally, we use differences in \emph{cost functions} $\gamma_a$ between protected groups $a \in \cA$ to define the \emph{level of discrimination} $\Gamma$,
\begin{equation}\label{eq:alphadisc}
\Gamma^\gamma(\hat{Y}) := \left| \gamma_0(\hat{Y}) - \gamma_1(\hat{Y}) \right| ~.
\end{equation}
In this work we study cost functions $\gamma_a \in \{\fpr_a, \fnr_a, \zo_a\}$ in binary classification tasks, with $\zo_a(\hat{Y}) := \mathbb{E}_X[\mathds{1}[\hat{Y} \neq Y] \mid A = a]$ the \emph{zero-one loss}. In regression problems, we use the group-specific \emph{mean-squared error} $\mse_a := \mathbb{E}_X[(\hat{Y}-Y)^2 \mid A = a]$. 
According to \eqref{eq:alphadisc}, predictions $\hY$ satisfy equalized odds on $d$ if $\Gamma^{\scfpr}(\hY) = 0$ \emph{and} $\Gamma^{\scfnr}(\hY) = 0$. 

\paragraph{Calibration and impossibility} 
A score-based classifier is \emph{calibrated} if the prediction score assigned to a unit equals the fraction of positive outcomes for all units assigned similar scores. It is impossible for a classifier to be calibrated in every protected group and satisfy multiple cost-based fairness criteria at once, unless accuracy is perfect or base rates of outcomes are equal across groups~\citep{chouldechova2017fair}. A relaxed version of this result~\citep{kleinberg2016inherent} applies to the discrimination level $\Gamma$. Inevitably, both constraint-based methods and our approach are faced with a choice between which fairness criteria to satisfy, and at what cost. 

%
%
\section{Sources of perceived discrimination}
\label{sec:sources}
There are many potential sources of  discrimination in predictive models. In particular, the choice of hypothesis class $\mathcal{H}$ and learning objective has received a lot of attention~\citep{calders2010three,zemel2013learning,fish2016confidence}. However, data collection---the chosen set of predictive variables $X$, the sampling distribution $p(X, A, Y)$, and the training set size $n$---is an equally integral part of deploying fair machine learning systems in practice, and it should be guided to promote fairness. 
Below, we tease apart sources of discrimination through bias-variance-noise decompositions of cost-based fairness criteria.
In general, we may think of noise in the outcome as the effect of a set of unobserved variables $U$, potentially interacting with $X$. Even the optimal achievable error for predictions based on $X$ may be reduced further by observing parts of $U$. In Figure~\ref{fig:scenarios}, we illustrate three common learning scenarios and study their fairness properties through bias, variance, and noise. 

%
%
\begin{figure*}[tbp!]
\centering
\begin{subfigure}[b]{0.32\textwidth}
\centering
\includegraphics[width=\textwidth]{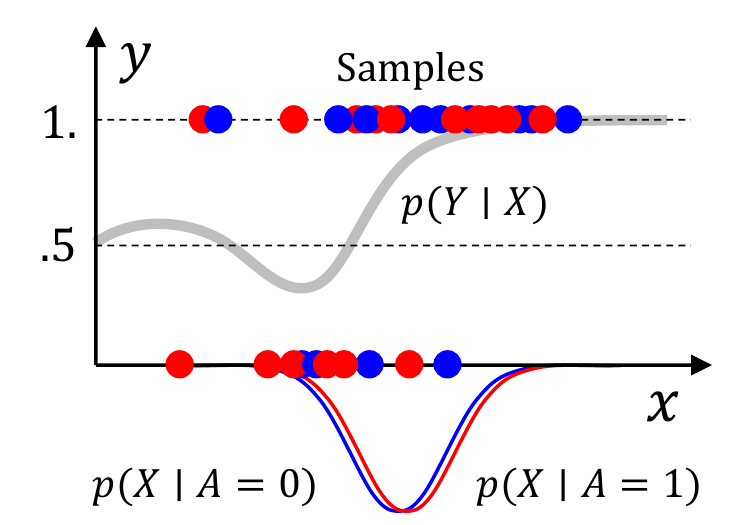}
\caption{\label{fig:scen_groups}For identically distributed protected groups and unaware outcome (see below), bias and noise are equal in expectation. Perceived discrimination is only due to variance.}
\end{subfigure}
\;
\begin{subfigure}[b]{0.32\textwidth}
\centering
\includegraphics[width=\textwidth]{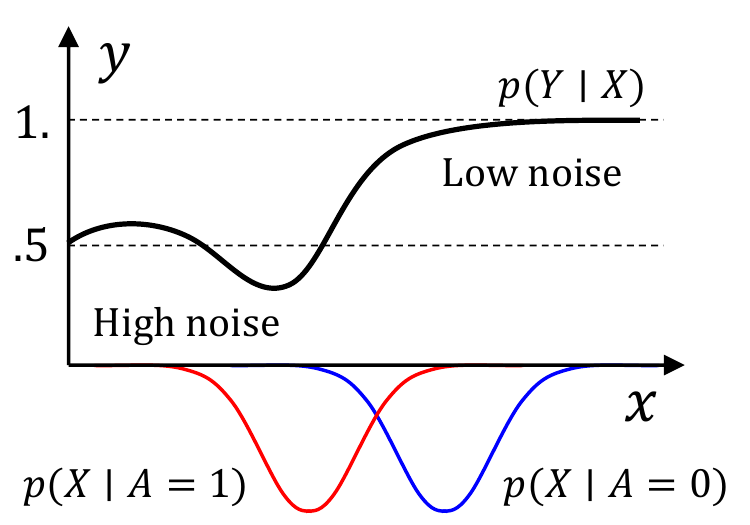}
\caption{\label{fig:scen_entropy}Heteroskedastic noise, i.e. $\exists x, x': N(x) \neq N(x')$, may contribute to discrimination even for an optimal model if protected groups are not identically distributed. }
\end{subfigure}
\;
\begin{subfigure}[b]{0.32\textwidth}
\centering
\includegraphics[width=\textwidth]{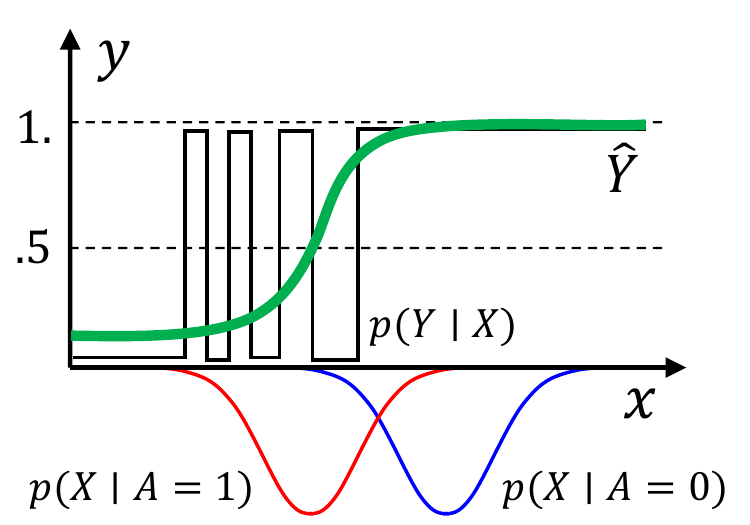}
\caption{\label{fig:scen_capacity}One choice of model may be more suited for one protected group, even under negligible noise and variance, resulting in a difference in expected bias, $\olB_0 \neq \olB_1$.}
\end{subfigure}
\caption{\label{fig:scenarios}Scenarios illustrating how properties of the training set and model choice affect perceived discrimination in a binary classification task, under the assumption that outcomes and predictions are \emph{unaware}, i.e. $p(Y\mid X, A) = p(Y\mid X)$ and $p(\hat{Y}\mid X, A) = p(\hat{Y}\mid X)$. Through bias-variance-noise decompositions (see Section~\ref{sec:bvd}), we can identify which of these dominate in their effect on fairness. We propose procedures for addressing each component in Section~\ref{sec:solutions}, and use them in experiments (see Section~\ref{sec:experiments}) to mitigate discrimination in income prediction and prediction of ICU mortality.}
\end{figure*}

To account for randomness in the sampling of training sets, we redefine discrimination level \eqref{eq:alphadisc} in terms of the \emph{expected} cost $\olgamma_{a}(\hat{Y}) := \mathbb{E}_D[ \gamma_{a}(\hat{Y}_D)]$ over draws of a random training set $D$.
\begin{thmdef}\label{def:exp_disc}
The \emph{expected discrimination level} $\olGamma(\hat{Y})$ of a predictive model $\hY$ learned from a random training set $D$, is
\begin{align}
\olGamma(\hat{Y}) & := \left| \E_D\left[\gamma_0(\hY_D) - \gamma_1(\hY_D) \right] \right| 
 \;= \left| \olgamma_0(\hat{Y}) - \olgamma_1(\hat{Y}) \right| ~.  \nonumber
\end{align}
\end{thmdef}
$\olGamma(\hY)$ is not observed in practice when only a single training set $d$ is available. If $n$ is small, it is recommended to estimate $\olGamma$ through re-sampling methods such as bootstrapping~\citep{efron1992bootstrap}.

%
%
\subsection{Bias-variance-noise decompositions of discrimination level}\label{sec:bvd}
An algorithm that learns models $\hY_D$ from datasets $D$ is given, and the covariates $X$ and size of the training data $n$ are fixed. We assume that $\hat{Y}_D$ is a deterministic function $\hat{y}_D(x,a)$ given the training set $D$, e.g. a thresholded scoring function. Following \citet{domingos2000unified}, we base our analysis on decompositions of loss functions $L$ evaluated at points $(x, a)$. For decompositions of costs $\gamma_a \in \{\zo{}, \fpr{}, \fnr{}\}$ we let this be the zero-one loss, $L(y,y') = \mathds{1}[y\neq y']$ , and for $\gamma_a = \mse$, the squared loss, $L(y,y') = (y-y')^2$. We define the \emph{main prediction} $\ymain(x, a) = \argmin_{y'} \mathbb{E}_D[L(\hat{Y}_D, y')\mid X=x, A=a]$ as the average prediction over draws of training sets for the squared loss, and the majority vote for the zero-one loss. The \emph{(Bayes) optimal prediction} $y^*(x, a) = \argmin_{y'} \mathbb{E}_Y[L(Y, y')\mid X=x, A=a]$ achieves the smallest expected error with respect to the random outcome $Y$. 

\begin{thmdef}[Bias, variance and noise] Following \citet{domingos2000unified}, we define bias $B$, variance $V$ and noise $N$ at a point $(x, a)$ below. 
\begin{equation}
\arraycolsep=1.4pt\def\arraystretch{1.4}
\begin{array}{rclrcl}
B(\hat{Y}, x, a) & = & L(y^*(x, a), \ymain(x, a)) 
& N(x, a) & = & \mathbb{E}_Y[L(y^*(x, a), Y)\mid X=x, A=a] \\
V(\hat{Y}, x, a) & = & \mathbb{E}_D[L(\ymain(x, a), \hat{y}_D(x, a))] ~.
\end{array}
\label{eq:bvd_def}
\end{equation}
Here, $y^*, \hat{y}$ and $\ymain$, are all deterministic functions of $(x,a)$, while $Y$ is a random variable.
\end{thmdef}

In words, the bias $B$ is the loss incurred by the main prediction relative to the optimal prediction. The variance $V$ is the average loss incurred by the predictions learned from different datasets relative to the main prediction. The noise $N$ is the remaining loss independent of the learning algorithm, often known as the Bayes error. We use these definitions to decompose $\olGamma$ under various definitions of $\gamma_a$.

\begin{thmthm}\label{thm:bvd}
With $\olgamma_a$ the group-specific zero-one loss or class-conditional versions (e.g. FNR, FPR), or the mean squared error, $\olgamma_a$ and the discrimination level $\olGamma$ admit decompositions of the form
$$
\olgamma_a(\hat{Y}) = \underbrace{\overline{N}_a}_{\mbox{Noise}} + \underbrace{\overline{B}_a(\hat{Y})}_{\mbox{Bias}} + \underbrace{\overline{V}_a(\hat{Y})}_{\mbox{Variance}}
\;\;\; \mbox{ and } \;\;\;\;
\olGamma = \left|(\overline{N}_0 - \overline{N}_{1}) + (\overline{B}_0 - \overline{B}_{1}) + (\overline{V}_0 - \overline{V}_{1}) \right|
$$
where we leave out $\hat{Y}$ in the decomposition of $\olGamma$ for brevity. With $B, V$ defined as in \eqref{eq:bvd_def}, we have
$$
\overline{B}_a(\hat{Y}) = \mathbb{E}_{X}[B(\ymain{}, X, a) \mid A=a] \;\;\;\mbox{ and }\;\;\;
\overline{V}_a(\hat{Y}) = \mathbb{E}_{X,D}[c_v(X) V(\hat{Y}_D, X, a) \mid A=a]~.
$$ 
For the zero-one loss, $c_v(x,a)=1$ if $\hat{y}_m(x,a) = y^*(x,a)$, otherwise $c_v(x,a) = -1$. For the squared loss $c_v(x,a) = 1$. 
The noise term for population losses is $$\overline{N}_a := \E_{X}[c_n(X,a) L(y^*(X,a), Y) \mid A=a]$$ and for class-conditional losses w.r.t class $y\in\{0, 1\}$, $$\overline{N}_a(y) := \E_{X}[c_n(X,a) L(y^*(X,a), y) \mid A=a, Y=y]~.$$ 
For the zero-one loss, and class-conditional variants, $c_n(x,a) = 2\E_{D}[\mathds{1}[\hat{y}_D(x,a) = y^*(x,a)]]-1$ and for the squared loss, $c_n(x,a) = 1$. 
\end{thmthm}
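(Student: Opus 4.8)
The plan is to reduce both claims to a single pointwise identity at each covariate value and then integrate. Since the discrimination decomposition $\olGamma = |(\olN_0 - \olN_1) + (\olB_0 - \olB_1) + (\olV_0 - \olV_1)|$ follows immediately from the per-group identity $\olgamma_a = \olN_a + \olB_a + \olV_a$ by substituting into $\olGamma = |\olgamma_0 - \olgamma_1|$ and regrouping terms, the entire content lies in establishing the per-group decomposition of $\olgamma_a$. For that I would first prove the pointwise decomposition
$$\E_{D,Y}[L(Y, \hy_D(x,a)) \mid X=x, A=a] = c_n(x,a) N(x,a) + B(\hY, x, a) + c_v(x,a) V(\hY, x, a),$$
and then take the expectation over $X \mid A=a$ on both sides. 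Because $c_v$ and $c_n$ depend on $x$ only (through $\ymain(x,a)$, $y^*(x,a)$, and the law of $\hy_D(x,a)$), linearity of expectation turns the right-hand side into exactly $\olN_a + \olB_a + \olV_a$ as defined in the statement, while the left-hand side is $\olgamma_a(\hY)$ for $\gamma_a = \zo_a$.

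For the pointwise identity I would treat the two loss functions separately. For the squared loss I would expand $L(Y, \hy_D) = (Y - \hy_D)^2$ by inserting $\pm\,\ymain(x,a)$ and $\pm\,y^*(x,a)$ and taking $\E_{D,Y}$; the two cross terms vanish, one because $\ymain = \E_D[\hy_D]$ and the other because $y^* = \E_Y[Y\mid X=x,A=a]$, together with independence of $D$ and the test outcome $Y$. This yields the classical additive form with $c_n(x,a) = c_v(x,a) = 1$. For the zero-one loss in the binary setting the cross terms do not vanish, so I would argue by direct case analysis. Using independence of $D$ and $Y$, I would expand $\E_{D,Y}[\mathds{1}[\hy_D \neq Y]]$ by conditioning on the events $\{\hy_D = y^*\}$ and $\{Y = y^*\}$: a unit loss occurs precisely when they disagree, so the error equals $P_D(\hy_D = y^*)\,N + P_D(\hy_D \neq y^*)(1 - N)$. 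Writing $P_D(\hy_D = y^*)$ as $1 - V$ when $\ymain = y^*$ and as $V$ when $\ymain \neq y^*$ and collecting terms reproduces the stated coefficients, namely $c_v = +1$ in the unbiased case and $-1$ in the biased case, with $c_n = 2\E_D[\mathds{1}[\hy_D(x,a) = y^*(x,a)]] - 1$ absorbing the residual noise--variance interaction.

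The class-conditional costs $\fpr_a$ and $\fnr_a$ are handled by the same pointwise identity but with the test label held fixed rather than averaged. Writing them as zero-one losses conditioned on $Y=0$ and $Y=1$ respectively, the outer expectation is over $X \mid A=a, Y=y$ and the role of the averaged noise $N = \E_Y[L(y^*,Y)]$ is played by the deterministic quantity $L(y^*(x,a), y)$; the identical case analysis gives $\E_D[L] = P_D(\hy_D = y^*)\,L(y^*,y) + P_D(\hy_D \neq y^*)(1 - L(y^*,y))$, so that integrating the $c_n$-weighted term over $X \mid A=a, Y=y$ yields exactly $\olN_a(y)$, while $\olB_a$ and $\olV_a$ are unchanged. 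Substituting the three resulting per-group decompositions into $\olGamma = |\olgamma_0 - \olgamma_1|$ then completes both displays.

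The main obstacle is the zero-one case of the pointwise identity: unlike the squared loss, its decomposition is not cleanly additive because noise and estimation error interact, and the whole subtlety of the theorem is absorbed into the coefficients $c_n$ and $c_v$ rather than appearing as a separate cross term. I would need to verify carefully that the binary case analysis reproduces exactly the stated signs and noise multiplier, and that after integrating over $X$ these $x$-dependent coefficients remain \emph{inside} the expectations defining $\olN_a$ and $\olV_a$ rather than factoring out. The remaining steps---expanding the squared loss and integrating the pointwise identity by linearity---are routine.
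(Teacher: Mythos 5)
Your proposal is correct and takes essentially the same approach as the paper: a pointwise Domingos-style bias--variance--noise identity at each $(x,a)$ for the squared and zero-one losses, a binary case analysis with the label held fixed for the class-conditional (FPR/FNR) versions, and then integration over $X \mid A=a$ (exchanging expectations, using independence of the training set $D$ from the test point) to obtain the group-level decomposition and hence the expression for $\olGamma$. The only difference is cosmetic: you re-derive the pointwise squared/zero-one decompositions that the paper simply cites from \citet{domingos2000unified}, and your direct computation of the error probability via $P_D(\hat{y}_D = y^*)$ is an equivalent reorganization of the paper's two-step case analysis in its Lemma A2.
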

\begin{thmproofsketch}
Conditioning and exchanging order of expectation, the cases of mean squared error and zero-one losses follow from \citet{domingos2000unified}. Class-conditional losses follow from a case-by-case analysis of possible errors. See the supplementary material for a full proof. \qed
\end{thmproofsketch}

Theorem~\ref{thm:bvd} points to distinct sources of perceived discrimination. Significant differences in bias $\olB_0-\olB_1$ indicate that the chosen model class is not flexible enough to fit both protected groups well (see Figure~\ref{fig:scen_capacity}). This is typical of (misspecified) linear models which approximate non-linear functions well only in small regions of the input space. 
Regularization or post-hoc correction of models effectively increase the bias of one of the groups, and should be considered only if there is reason to believe that the original bias is already minimal. 

Differences in variance, $\olV_0 - \olV_1$, could be caused by differences in sample sizes $n_0, n_1$ or group-conditional feature variance $\mbox{Var}(X\mid A)$, combined with a high capacity model. Targeted collection of training samples may help resolve this issue. Our decomposition does not apply to post-hoc randomization methods~\citep{hardt2016equality} but we may treat these in the same way as we do random training sets and interpret them as increasing the variance $\olV_a$ of one group to improve fairness.

%
When noise is significantly different between protected groups, discrimination is partially unrelated to model choice and training set size and may only be reduced by measuring additional variables. 
\begin{thmprop}
If $\overline{N}_0 \neq \overline{N}_1$, no model can be 0-discriminatory in expectation without access to additional information or increasing bias or variance w.r.t. to the Bayes optimal classifier. 
\label{prop:bayes}
\end{thmprop}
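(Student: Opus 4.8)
The plan is to prove the contrapositive: I will show that the only model that simultaneously relies solely on the given covariates $X$ and does not increase bias or variance relative to the Bayes optimal classifier \emph{is} the Bayes optimal classifier, and that this classifier already has a strictly positive expected discrimination level whenever $\olN_0 \neq \olN_1$. Requiring $\olGamma = 0$ therefore forces at least one of the three stated escapes. First I would invoke the decomposition of Theorem~\ref{thm:bvd},
$$
\olGamma = \left| (\olN_0 - \olN_1) + (\olB_0 - \olB_1) + (\olV_0 - \olV_1) \right|,
$$
and isolate the role of each term under the stated constraints.

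Next I would read off what ``no additional information'' buys us. With the covariate set held fixed, the Bayes optimal prediction $y^*(x,a)$ and hence the irreducible error $\E_X[L(y^*, Y)\mid A=a]$ depend only on $p(Y\mid X, A)$; no choice of learning algorithm can alter them. Observing additional variables $U$ is the sole mechanism that can change $y^*$ and thereby reduce or equalize the noise, which is exactly the excluded ``additional information.'' I would then pin down the second constraint. Since $B(\hY, x, a) = L(y^*, \ymain)$ with $L$ vanishing only on the diagonal, the Bayes optimal classifier attains the minimal bias $B=0$, and being deterministic it attains $V=0$; these are the smallest attainable values. Demanding that a model not exceed them forces $\ymain(x,a) = y^*(x,a)$ and $\hy_D(x,a) = \ymain(x,a)$ almost surely, i.e. the model coincides with $y^*$ on the support of $X$.

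Finally I would substitute this characterization back into the decomposition. When $\hy_D = y^*$, the zero-one noise coefficient becomes $c_n(x,a) = 2\,\E_D[\mathds{1}[\hy_D = y^*]] - 1 = 1$, so each $\olN_a$ reduces to the pure Bayes error, while $\olB_a = \olV_a = 0$ for both groups. The decomposition then collapses to $\olGamma = |\olN_0 - \olN_1| > 0$ by hypothesis, contradicting $0$-discrimination. Hence any $0$-discriminatory model must either observe additional variables or incur $\olB_a > 0$ or $\olV_a > 0$ for some group, which is the claim.

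The main obstacle is handling the model-dependent coefficients $c_n$ (and, for the zero-one loss, the signed weight $c_v$) that appear in Theorem~\ref{thm:bvd}: I must verify carefully that when the model equals $y^*$ these coefficients take their clean values, so that the decomposition genuinely reduces to a difference of Bayes errors rather than some re-weighted quantity, and that ``noise'' is indeed the model-independent part of the error fixed by the covariates. Once this reduction is justified, the remainder is a direct reading of the decomposition and the nonnegativity of bias and variance at the Bayes optimum.
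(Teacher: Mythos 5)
Your proof is correct and takes essentially the same route as the paper's: both rest on the decomposition of Theorem~\ref{thm:bvd} together with the fact that the Bayes optimal classifier has zero bias and zero variance, with your version simply running the argument as a contrapositive (zero extra bias/variance forces $\hy_D = y^*$, whence $\olGamma = |\olN_0 - \olN_1| > 0$) where the paper argues directly ($\olGamma = 0$ forces the nonzero noise gap to be absorbed by bias/variance differences). Your added verification that a model with no excess bias or variance coincides with $y^*$, so that $c_n \equiv 1$ and the noise terms in the decomposition reduce to the pure Bayes errors, is a careful detail that the paper's two-line proof leaves implicit.
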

\begin{proof}By definition, $\olGamma = 0 \implies (\olN_1 - \olN_0) = (\olB_0 - \olB_1) + (\olV_0 - \olV_1)$. As the Bayes optimal classifier has neither bias nor variance, the result follows immediately.
\end{proof}
In line with Proposition~\ref{prop:bayes}, 
most methods for ensuring algorithmic fairness reduce  discrimination by trading off a difference in noise for one in bias or variance. However, this trade-off is only motivated if the considered predictive model is close to Bayes optimal \emph{and} no additional predictive variables may be measured. Moreover, if noise is homoskedastic in regression settings, post-hoc randomization is ill-advised, as the difference in Bayes error $\olN_0 - \olN_1$ is zero, and discrimination is caused only by model bias or variance (see the supplementary material for a proof).

\paragraph{Estimating bias, variance and noise}
\label{sec:estimation}
Group-specific variance $\olV_a$ may be estimated through sample splitting or bootstrapping~\citep{efron1992bootstrap}. In contrast, the noise $\olN_a$ and bias $\olB_a$ are difficult to estimate when $X$ is high-dimensional or continuous. In fact, no convergence results of noise estimates may be obtained without further assumptions on the data distribution~\citep{antos1999lower}. Under some such assumptions, noise may be approximately estimated using distance-based methods~\citep{devijverkittler1982}, nearest-neighbor methods~\citep{fukunaga1987bayes,cover1967nearest}, or classifier ensembles~\citep{tumer1996estimating}. When comparing the discrimination level of two different models, noise terms cancel, as they are independent of the model. As a result, \emph{differences} in bias may be estimated even when the noise is not known (see the supplementary material).

\paragraph{Testing for significant discrimination} When sample sizes are small, perceived discrimination may not be statistically significant. In the supplementary material, we give statistical tests both for the discrimination level $\Gamma(\hat{Y})$ and the difference in discrimination level between two models $\hat{Y}, \hat{Y}'$.

%
%
\section{Reducing discrimination through data collection} \label{sec:solutions}
In light of the decomposition of Theorem~\ref{thm:bvd}, we explore avenues for reducing group differences in bias, variance, and noise without sacrificing predictive accuracy. In practice, predictive accuracy is often artificially limited when data is expensive or impractical to collect. With an investment in training samples or measurement of predictive variables, both accuracy and fairness may be improved.

%
%
\subsection{Increasing training set size}
\label{sec:trainsetsize}
Standard regularization used to avoid overfitting is not guaranteed to improve or preserve fairness. An alternative route is to collect more training samples and reduce the impact of the bias-variance trade-off. When supplementary data is collected from the same distribution as the existing set, covariate shift may be avoided~\citep{quionero2009dataset}. This is often achievable; labeled data may be expensive, such as when paying experts to label observations, but given the means to acquire additional labels, they would be drawn from the original distribution.
To estimate the value of increasing sample size, we predict the discrimination level $\olGamma(\hat{Y}_D)$ as $D$ increases in size. 

The curve measuring generalization performance of predictive models as a function of training set size $n$ is called a Type II \emph{learning curve}~\citep{domhan2015speeding}. We call $\olgamma_a(\hat{Y}, n) := \E[\gamma_a(\hat{Y}_{D_n})]$, as a function of $n$, the learning curve with respect to protected group $a$. We define the discrimination learning curve $\olGamma(\hat{Y}, n) \coloneqq |\olgamma_0(\hat{Y}, n) - \olgamma_1(\hat{Y}, n)|$ (see Figure~\ref{fig:adult} for an example).
Empirically, learning curves behave asymptotically as \emph{inverse power-law} curves for diverse algorithms such as deep neural networks, support vector machines, and nearest-neighbor classifiers, even when model capacity is allowed to grow with $n$~\citep{hestness2017deep,mukherjee2003estimating}. This observation is also supported by theoretical results~\citep{amari1993universal}. 

\begin{thmasmp}[Learning curves]\label{asmp:pow3}
The population prediction loss $\olgamma(\hat{Y}, n)$, and group-specific losses $\olgamma_0(\hat{Y}, n), \olgamma_1(\hat{Y}, n)$, for a fixed learning algorithm $\hat{Y}$, behave asymptotically as inverse power-law curves with parameters $(\alpha, \beta, \delta)$. That is, $\exists M, M_0, M_1$ such that for $n \geq M, n_a \geq M_a$, 
\begin{equation}
\olgamma(\hat{Y}, n) = \alpha n^{-\beta} + \delta \;\;\; \mbox{ and } \;\;\; 
\forall a\in \cA: \olgamma_a(\hat{Y}, n_a) = \alpha_a n_a^{-\beta_a} + \delta_a 
\label{eq:pow3}
\end{equation}
\end{thmasmp}
Intercepts, $\delta, \delta_a$ in \eqref{eq:pow3} represent the asymptotic bias $\olB(\hY_{D_\infty})$ and the Bayes error $\olN$, with the former vanishing for consistent estimators. Accurately estimating $\delta$ from finite samples is often challenging as the first term tends to dominate the learning curve for practical sample sizes.

In experiments, we find that the inverse power-laws model fit group conditional ($\gamma_a$) and class-conditional (\fpr, \fnr) errors well, and use these to extrapolate $\olGamma(\hY,n)$ based on estimates from subsampled data. 

%
%
\subsection{Measuring additional variables}
\label{sec:add_var}
When discrimination $\olGamma$ is dominated by a difference in noise, $\olN_0 - \olN_1$, fairness may not be improved through model selection alone without sacrificing accuracy (see Proposition~\ref{prop:bayes}). Such a scenario is likely when available covariates are not equally predictive of the outcome in both groups. We propose identification of clusters of individuals in which discrimination is high as a means to guide further variable collection---if the variance in outcomes within a cluster is not explained by the available feature set, additional variables may be used to further distinguish its members. 

Let a random variable $C$ represent a (possibly stochastic) clustering such that $C = c$ indicates membership in cluster $c$.
Then let $\rho_a(c)$ denote the expected prediction cost for units in cluster $c$ with protected attribute $a$. As an example, for the zero-one loss we let
$$
\rho_a^{\text{ZO}}(c) := \mathbb{E}_X[\mathds{1}[\hY \neq Y] \mid A=a, C = c],
$$
and define $\rho$ analogously for false positives or false negatives. Clusters $c$ for which $|\rho_0(c) - \rho_1(c) |$ is large identify groups of individuals for which discrimination is worse than average, and can guide targeted collection of additional variables or samples. In our experiments on income prediction, we consider particularly simple clusterings of data defined by subjects with measurements above or below the average value of a single feature $x(c)$ with $c \in \{1, \ldots, k\}$. 
In mortality prediction, we cluster patients using topic modeling.
As measuring additional variables is expensive, the utility of a candidate set should be estimated before collecting a large sample~\citep{koepke2012fast}.

%
%
\section{Experiments}
\label{sec:experiments}
We analyze the fairness properties of standard machine learning algorithms in three tasks: prediction of  income based on national census data, prediction of patient mortality based on clinical notes, and prediction of book review ratings based on review text.\footnote{A synthetic experiment validating group-specific learning curves is left to the supplementary material.} We disentangle sources of discrimination by assessing the level of discrimination for the full data,estimating the value of increasing training set size by fitting Type II learning curves, and using clustering to identify subgroups where discrimination is high. In addition, we estimate the Bayes error through non-parametric techniques. 

In our experiments, we omit the sensitive attribute $A$ from our classifiers to allow for closer comparison to previous works, e.g. \citet{hardt2016equality, zafar2017fairness}. In preliminary results, we found that fitting separate classifiers for each group increased the error rates of both groups due to the resulting smaller sample size, as classifiers could not learn from other groups. As our model objective is to maximize accuracy over all data points, our analysis uses a single classifier trained on the entire population.

\subsection{Income prediction}
\label{sec:exp_income}


\begin{figure*}[tbp!]
\begin{subfigure}[b]{.44\textwidth}
\centering
\includegraphics[width=.85\columnwidth]{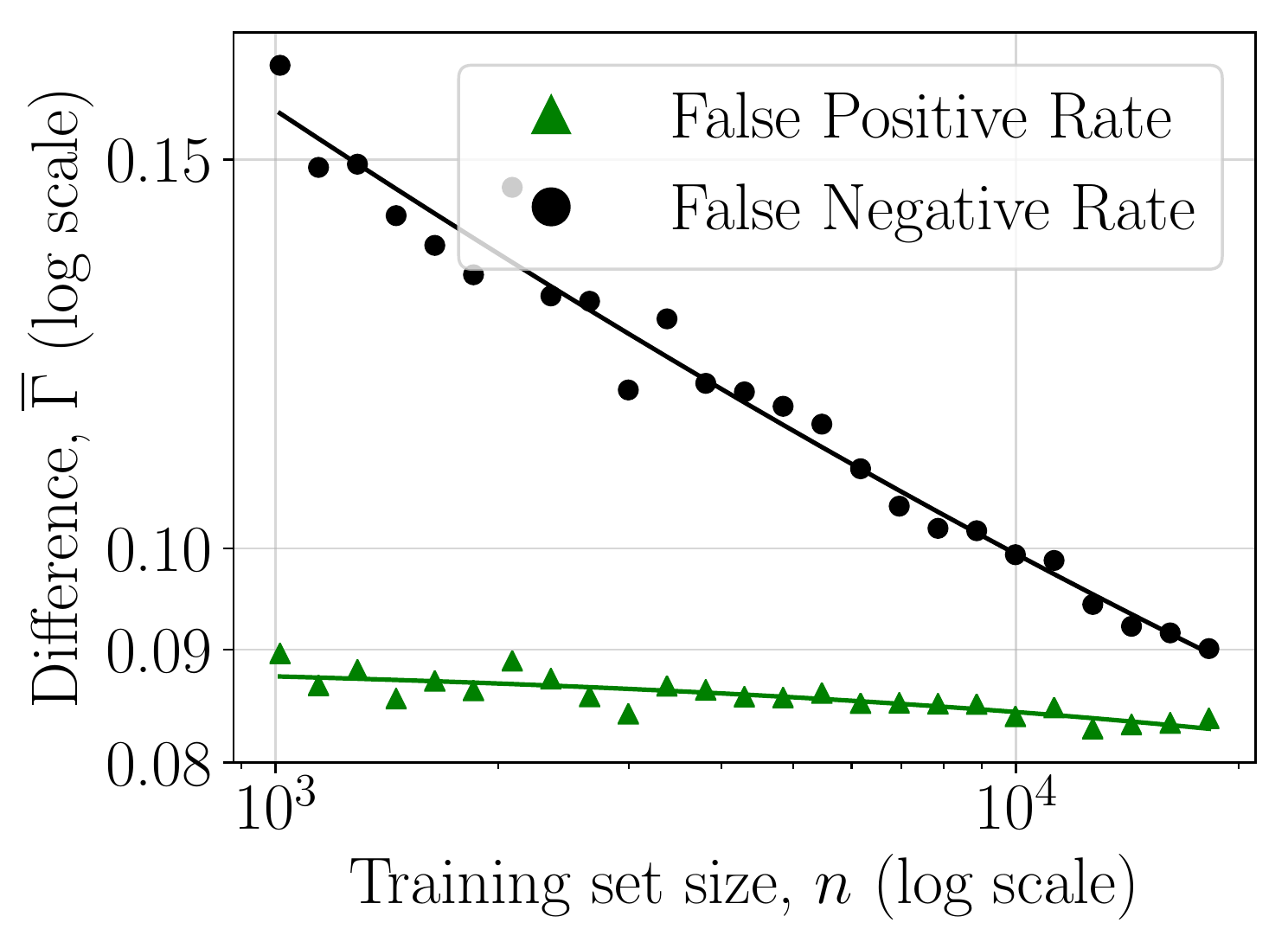}
\caption{\label{fig:adult} Group differences in false positive rates and false negative rates for a random forest classifier decrease with increasing training set size. }

\end{subfigure}
\;
\begin{subfigure}[b]{.54\textwidth}
\centering
\begin{tabular}{c|c|c|c}
Method & $E_{low}$ & $E_{up}$ & group \\
\hline
\hline
Mahalanobis &  -- & 0.29 & men \\ 
\citep{mahalanobis1936generalized} &  -- & 0.13 & women \\ 
 \hline 
Bhattacharyya & 0.001 & 0.040 & men \\
\citep{bhattacharyya1943measure} & 0.001 & 0.027 & women \\
 \hline
Nearest Neighbors & 0.10 & 0.19 & men \\
 \citep{cover1967nearest} & 0.04 & 0.07 & women \\
\end{tabular}
\caption{\label{fig:adult_error} Estimation of Bayes error lower and upper  bounds ($E_{low}$ and $E_{up}$) for zero-one loss of men and women. Intervals for men and women are non-overlapping for Nearest Neighbors.} 
\end{subfigure}
\caption{\label{fig:exp_income} Discrimination level and noise estimation in income prediction with the Adult dataset.}
\end{figure*}

Predictions of a person's salary may be used to help determine an individual's market worth, but systematic underestimation of the salary of protected groups could harm their competitiveness on the job market. The Adult dataset in the UCI Machine Learning Repository~\citep{lichman2013uci} contains 32,561 observations of yearly income (represented as a binary outcome: over or under \$50,000) and twelve categorical or continuous features including education, age, and marital status. Categorical attributes are dichotomized, resulting in a total of 105 features. 

We follow \citet{pleiss2017fairness} and strive to ensure fairness across genders, which is excluded as a feature from the predictive models. Using an 80/20 train-test split, we learn a random forest predictor, which is is well-calibrated for both groups (\citet{brier1950verification} scores of 0.13 and 0.06 for men and women). We find the difference in zero-one loss $\Gamma^{\sczo}(\hY)$ has a 95\%-confidence interval\footnote{Details for computing statistically significant discrimination can be found in the supplementary material.} $.085 \pm .069$ with decision thresholds at 0.5. At this threshold, the false negative rates are $0.388 \pm 0.026$ and $0.448 \pm 0.064$ for men and women respectively, and the false positive rates $0.111 \pm 0.011$ and $0.033 \pm 0.008$. We focus on random forest classifiers, although we found similar results for logistic regression and decision trees.

We examine the effect of varying training set size $n$ on discrimination. We fit inverse power-law curves to estimates of $\fpr(\hY,n)$ and $\fnr(\hY,n)$ using repeated sample splitting where at least 20\% of the full data is held out for evaluating generalization error at every value of $n$. We tune hyperparameters for each training set size for decision tree classifiers and logistic regression but tuned over the entire dataset for random forest. We include full training details in the supplementary material. Metrics are averaged over 50 trials. See Figure~\ref{fig:adult} for the results for random forests. Both \fpr{} and \fnr{} decrease with additional training samples. The discrimination level $\Gamma^{\scfnr}$ for false negatives decreases by a striking 40\% when increasing the training set size from 1000 to 10,000. This suggests that trading off accuracy for fairness at small sample sizes may be ill-advised. Based on fitted power-law curves, we estimate that for unlimited training data drawn from the same distribution, we would have $\Gamma^{\scfnr}(\hY) \approx 0.04$ and $\Gamma^{\scfpr}(\hY) \approx 0.08$. 

In Figure~\ref{fig:adult_error}, we compare estimated upper and lower bounds on noise ($E_{low}$ and $E_{up}$) for men and women using the Mahalanobis and Bhattacharyya distances~\citep{devijverkittler1982}, and a $k$-nearest neighbor method~\citep{cover1967nearest} with $k=5$ and 5-fold cross validation. Men have consistently higher noise estimates than women, which is consistent with the differences in zero-one loss found using all models. For nearest neighbors estimates, intervals for men and women are non-overlapping, which suggests that noise may contribute substantially to discrimination.

To guide attempts at reducing discrimination further, we identify clusters of individuals for whom false negative predictions are made at different rates between protected groups, with the method described in Section~\ref{sec:add_var}. We find that for individuals in executive or managerial occupations (12\% of the sample), false negatives are more than twice as frequent for women (0.412) as for men (0.157). For individuals in all other occupations, the difference is significantly smaller, 0.543 for women and 0.461 for men, despite the fact that the disparity in outcome base rates in this cluster is large (0.26 for men versus 0.09 for women). A possible reason is that in managerial occupations the available variable set explains a larger portion of the variance in salary for men than for women. If so, further sub-categorization of managerial occupations could help reduce  discrimination in prediction.

%


\subsection{Intensive care unit mortality prediction}

\begin{figure*}[tbp!]
\includegraphics[width=0.9\textwidth]{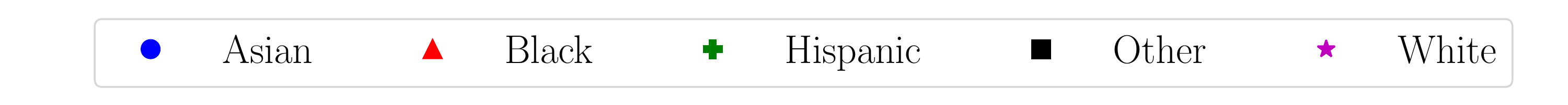}
\centering 
\begin{subfigure}[b]{.31\textwidth}
\centering
\includegraphics[width=.95\columnwidth]{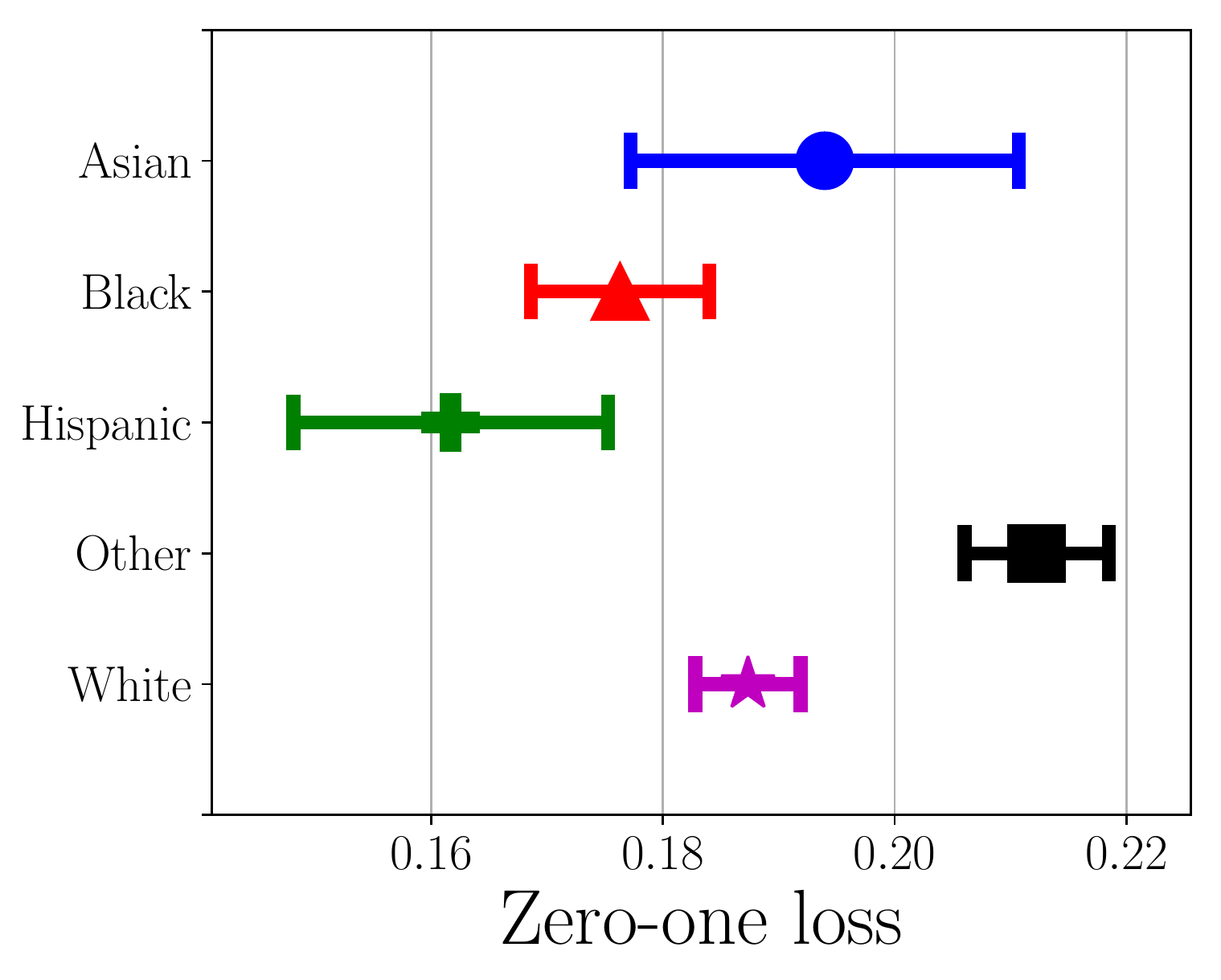}
\caption{\label{fig:notes_intervals} Using Tukey's range test, we can find the 95\%-significance level for the zero-one loss for each group over 5-fold cross validation.}
\end{subfigure}
\;
\begin{subfigure}[b]{.31\textwidth}
\centering
\includegraphics[width=1.\columnwidth]{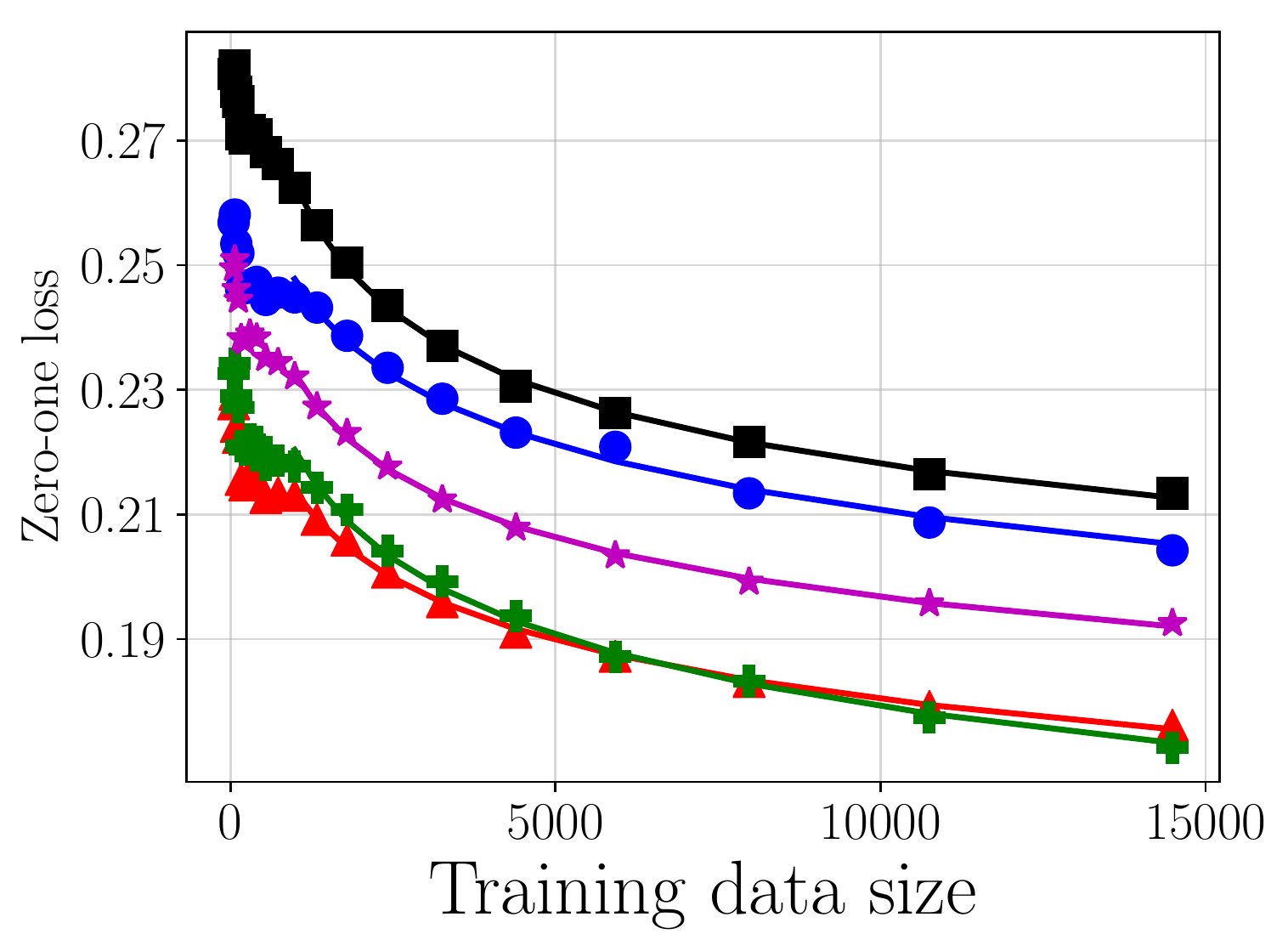}
\caption{\label{fig:notes_addtrain} As training set size increases, zero-one loss over 50 trials decreases over all groups and appears to converge to an asymptote.}
\end{subfigure}
\;
\begin{subfigure}[b]{.31\textwidth}
\includegraphics[width=1.\textwidth]{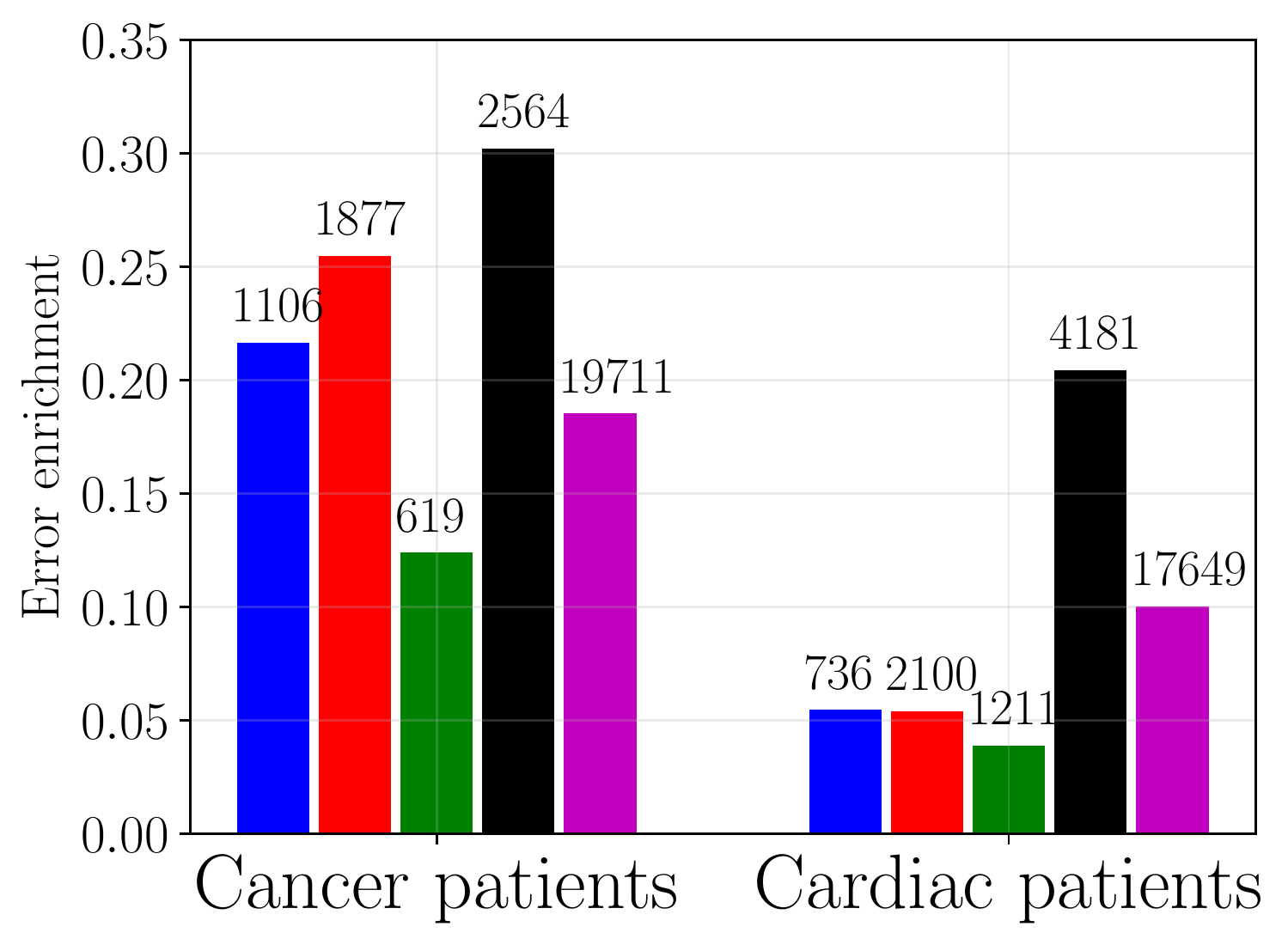}
\caption{\label{fig:topic_errors} Topic modeling reveals subpopulations with high differences in zero-one loss, for example cancer patients and cardiac patients.}
\end{subfigure}
\caption{\label{fig:exp_notes} Mortality prediction from clinical notes using logistic regression. Best viewed in color.}
\end{figure*}

Unstructured medical data such as clinical notes can reveal insights for questions like mortality prediction; however, disparities in predictive accuracy may result in discrimination of protected groups. 
Using the MIMIC-III dataset of all clinical notes from 25,879 adult patients from Beth Israel Deaconess Medical Center \citep{mimiciii}, we predict hospital mortality of patients in critical care. Fairness is studied with respect to five self-reported ethnic groups of the following proportions: Asian (2.2\%), Black (8.8\%), Hispanic (3.4\%), White (70.8\%), and Other (14.8\%). Notes were collected in the first 48 hours of an intensive care unit (ICU) stay; discharge notes were excluded. We only included patients that stayed in the ICU for more than 48 hours. We use the tf-idf statistics of the 10,000 most frequent words as features. Training a model on 50\% of the data, selecting hyper-parameters on 25\%, and testing on 25\%, we find that logistic regression with L1-regularization achieves an AUC of 0.81. The logistic regression is well-calibrated with Brier scores ranging from 0.06-0.11 across the five groups; we note better calibration is correlated with lower prediction error.

We report cost and discrimination level in terms of generalized zero-one loss~\citep{pleiss2017fairness}. Using an ANOVA test~\citep{fisher1925statistical} with  $p < 0.001$, we reject the null hypothesis that loss is the same among all five groups. To map the 95\% confidence intervals, we perform pairwise comparisons of means using Tukey's range test \citep{tukey1949comparing} across 5-fold cross-validation. As seen in Figure~\ref{fig:notes_intervals}, patients in the Other and Hispanic groups have the highest and lowest generalized zero-one loss, respectively, with relatively few overlapping intervals. Notably, the largest ethnic group (White) does not have the best accuracy, whereas smaller ethnic groups tend towards extremes. While racial groups differ in hospital mortality base rates (Table 1 in the Supplementary material), Hispanic (10.3\%) and Black (10.9\%) patients have very different error rates despite similar base rates.

To better understand the discrimination induced by our model, we explore the effect of changing training set size. To this end, we repeatedly subsample and split the data, holding out at least 20\% of the full data for testing. In Figure~\ref{fig:notes_addtrain}, we show loss averaged over 50 trials of training a logistic regression on increasingly larger training sets; estimated inverse power-law curves show good fits. We see that some pairwise differences in loss decrease with additional training data. 

Next, we identify clusters for which the difference in prediction errors between protected groups is large. We learn a topic model with $k=50$ topics generated using Latent Dirichlet Allocation \citep{blei2003latent}. Topics are concatenated into an $n\times k$ matrix $Q$ where $q_{ic}$ designates the proportion of topic $c\in[k]$ in note $i\in [n]$. Following prior work on enrichment of topics in clinical notes \citep{marlin2012unsupervised, ghassemi2014unfolding}, we estimate the probability of patient mortality $Y$ given a topic $c$ as $\hat{p}(Y|C=c) := (\sum_{i=1}^n y_i  q_{ic})/(\sum_{i=1}^n q_{ic})$ where $y_i$ is the hospital mortality of patient $i$. We compare relative error rates given protected group and topic using binary predicted mortality $\hat{y}_i$, actual mortality $y_i$, and group $a_i$ for patient $i$ through 

\[\hat{p}(\hat{Y} \neq Y \mid A = a', C=c) = \frac{\sum_{i=1}^n \mathbb{1}(y_i \neq \hat{y_i}) \mathbb{1}(a_i = a')  q_{ic} }{\sum_{i=1}^n \mathbb{1}(a_i = a') q_{ic}}\]

which follows using substitution and conditioning on $A$. These error rates were computed using a logistic regression with L1 regularization using an 80/20 train-test split over 50 trials. While many topics have consistent error rates across groups, some topics (e.g. cardiac patients or cancer patients as shown in Figure~\ref{fig:topic_errors}) have large differences in error rates across groups. We include more detailed topic descriptions in the supplementary material. Once we have identified a subpopulation with particularly high error, for example cancer patients, we can consider collecting more features or collecting more data from the same data distribution. We find that error rates differ between $0.12$ and $0.30$ across protected groups of cancer patients, and between $0.05$ and $0.20$ for cardiac patients. 

%
%
\subsection{Book review ratings}
In the supplementary material, we study prediction of book review ratings from review texts \citep{goodreads}. The protected attribute was chosen to be the gender of the author as determined from Wikipedia. In the dataset, the difference in mean-squared error $\Gamma^{\scmse}(\hY)$ has 95\%-confidence interval $0.136 \pm 0.048$ with $\mse_M = 0.224$ for reviews for male authors and $\mse_F = 0.358$. Strikingly, our findings suggest that $\Gamma^{\scmse}(\hY)$ may be completely eliminated by additional targeted sampling of the less represented gender.

%
%
\section{Discussion}

We identify that existing approaches for reducing discrimination induced by prediction errors may be unethical or impractical to apply in settings where predictive accuracy is critical, such as in healthcare or criminal justice. As an alternative, we propose a procedure for analyzing the different sources contributing to discrimination. Decomposing well-known definitions of cost-based fairness criteria in terms of differences in bias, variance, and noise, we suggest methods for reducing each term through model choice or additional training data collection. Case studies on three real-world datasets confirm that collection of additional samples is often sufficient to improve fairness, and that existing post-hoc methods for reducing discrimination may unnecessarily sacrifice predictive accuracy when other solutions are available. 

Looking forward, we can see several avenues for future research. In this work, we argue that identifying clusters or subpopulations with high predictive disparity would allow for more targeted ways to reduce discrimination. We encourage future research to dig deeper into the question of local or context-specific unfairness in general, and into algorithms for addressing it. Additionally, extending our analysis to intersectional fairness~\citep{buolamwini2018gender, hebert2017calibration}, e.g. looking at both gender and race or all subdivisions, would provide more nuanced grappling with unfairness. Finally, additional data collection to improve the model may cause unexpected delayed impacts~\citep{liu2018delayed} and negative feedback loops~\citep{ensign17} as a result of distributional shifts in the data. More broadly, we believe that the study of fairness in  non-stationary populations is an interesting direction to pursue.

\section*{Acknowledgements}
The authors would like to thank Yoni Halpern and Hunter Lang for helpful comments, and Zeshan Hussain for clinical guidance. This work was partially supported by Office of Naval Research Award No. N00014-17-1-2791 and NSF CAREER award \#1350965.

\bibliography{bib}
\bibliographystyle{icml2018}

\clearpage
\appendix
\section{Testing for significant discrimination}
\label{sec:disc_test}
In general, neither $\Gamma$ nor $\olGamma$ can be computed exactly, as the expectations $\gamma_a = \E_p[L(Y, \hat{Y}) \mid A=a]$ and $\olgamma$, for $a\in \cA$ are known only approximately  through a set of samples $S = \{(x_i, a_i, y_i)\}_{i=1}^m \sim p^m$ drawn from the (possibly class-conditional) population $p$. The Monte Carlo estimate, 
$$
\gamma^S_a(\hat{Y}) = \frac{1}{m_a} \sum_{i=1}^{m} L(y_i, \hat{y}_i) \mathds{1}[a_i = a]~,
$$
with $m_a = \sum_{i=1}^m \mathds{1}[a_i = a]$, 
may be used to form an estimate $\Gamma^S(\hat{Y}) = |\gamma^S_0(\hat{Y}) - \gamma^S_1(\hat{Y})|$. By the central limit theorem, for sufficiently large $m$, $\gamma^S_a(\hat{Y}) \sim \cN(\mu_a, \sigma_a^2/m_a)$ and $(\gamma^S_0 - \gamma^S_1) \sim \cN(\mu_0 - \mu_1, \sigma_0^2/m_0 + \sigma_1^2/m_1)$. As a result, the significance of $\Gamma^S(\hat{Y})$ can be tested with a two-tailed z-test or using the test of \citet{woodworth2017learning}. If sample sizes are small and the target binary, more appropriate tests are available~\citep{brown2001interval}.
In addition, we will often want to compare the discrimination levels $\Gamma(\hat{Y}), \Gamma(\hat{Y}')$ of predictors $\hat{Y}, \hat{Y}'$, resulting from different learning algorithms, models, or sets of observed variables. The random variable $|\Gamma^S(\hat{Y}) - \Gamma^S(\hat{Y}')|$ is not Normal distributed, but is an absolute difference of folded-normal variables. However, for any $\alpha \in \{-1,1\}$, $Z_{\alpha} := \alpha(\gamma_0^S(\hat{Y}) - \gamma_1^S(\hat{Y})) - (\gamma_0^S(\hat{Y}') - \gamma_1^S(\hat{Y}'))$ is Normal distributed. Further, by enumerating the signs of $(\gamma_0^S(\hat{Y}) - \gamma_1^S(\hat{Y}))$ and $(\gamma_0^S(\hat{Y}') - \gamma_1^S(\hat{Y}'))$, we can show that $|\Gamma^S - {\Gamma^S}'| = \min_{\alpha \in \{-1,1\}} |Z_{\alpha}|$. As a result, to reject the null hypothesis $H_0 : \Gamma = \Gamma'$, we require that the observed values of both $Z_{-1}$ and $Z_{1}$ are unlikely under $H_0$ at given significance.

\section{Additional experimental details}

\subsection{Datasets}
\label{app:datasets}
\begin{itemize}
\item Adult Income Dataset~\citep{lichman2013uci}. The dataset has 32,561
  instances. The target variable indicates whether
  or not income is larger than 50K dollars, and the
  sensitive feature is Gender. Each data object is
  described by 14 attributes which include 8 categorical
  and 6 numerical attributes. We quantize the categorical attributes into binary features and keep the continuous attributes, which results in 105 features for prediction. We note the label imbalance as 30\% of male adults have income over 50K whereas only 10\% of female adults have income over 50K. Additionally 24\% of all adults have salary over 50K, and the dataset has 33\% women and 67\% men.
\item Goodreads reviews~\cite{goodreads}, only included in the supplemental materials. The dataset was collected from Oct 12, 2017 to Oct 21, 2017 and has 13,244 reviews. The target variable is the rating of the review, and the sensitive feature is the gender of the author. Genders were gathered by querying Wikipedia and using pronoun inference, and the dataset is a subset of the original Goodreads dataset because it only includes reviews about the top 100 most popular authors. Each datum consists of the review text, vectorized using Tf-Idf. The review scores occurred with counts 578, 2606, 4544, 5516 for scores 1,3,4, and 5 respectively. Books by women authors and men authors had average scores of 4.088 and 4.092 respectively.
\item MIMIC-III dataset \citep{mimiciii}. The dataset includes 25,879 adult patients admitted to the intensive care unit of the Beth Israel Deaconess Medical Center in downtown Boston. Clinical notes from the first 48 hours are used to predict hospital mortality after 48 hours. Of all adult patients, 13.8\% patients died in the hospital. We are interested in the difference in performance between the five self-reported ethnic groups and following data sizes and hospital mortality rates.

\begin{table}[h!]
\centering
 \begin{tabular}{c c c c} 
 Race & \# patients & \% total & Hospital Mortality \\ 
 \hline
 \hline
Asian & 583 & 2.3 & 14.2 \\
Black & 2,327 & 9.0 & 10.9 \\
Hispanic & 832 & 3.2 & 10.3 \\
Other & 3,761 & 14.5 & 18.4 \\
White & 18,377 & 71.0 & 13.4 \\
 \end{tabular}
 \caption{Summary statistics of clinical notes dataset}
\end{table}
\end{itemize}

\subsection{Synthetic experiments}
To illustrate the effect of training set size and model choice, and the validity of the power-law learning curve assumption, we conduct a small synthetic experiment in which $p(A=1) = 0.3$ and $X \sim \cN(\mu_A, \sigma_A^2)$ with $\mu_0 = 0, \mu_1 = 1, \sigma_0 = 1, \sigma_1 = 2$. The outcome is a quadratic function with heteroskedastic noise, $Y = 2X^2 - 2X + .1 + \epsilon X^2$, with $\epsilon \sim \cN(0,1)$. We fit decision tree, random forest and ridge regressors of the outcome $Y$ to $X$ using default parameters in the implementation in scikit-learn~\citep{scikit-learn}, but limiting the decision tree to depth $T\leq 4$. The size of the training set is varied exponentially between $2^5$ and $2^{17}$ samples, and at each size, trees are fit 200 times. In Figure~\ref{fig:learning_curves}, we show the resulting learning curves $\olgamma_0(\hat{Y},n)$ and $\olgamma_1(\hat{Y},n)$ as well as fits of Pow3 curves to them. Shown in dotted lines are extrapolations of learning curves from different sample sizes, illustrating the difficulty of estimating the intercepts $\delta_a$ and the Bayes error with high accuracy. 

\begin{figure}[tbp!]
\centering
\begin{subfigure}[b]{.47\textwidth}
\centering
\includegraphics[width=1\textwidth]{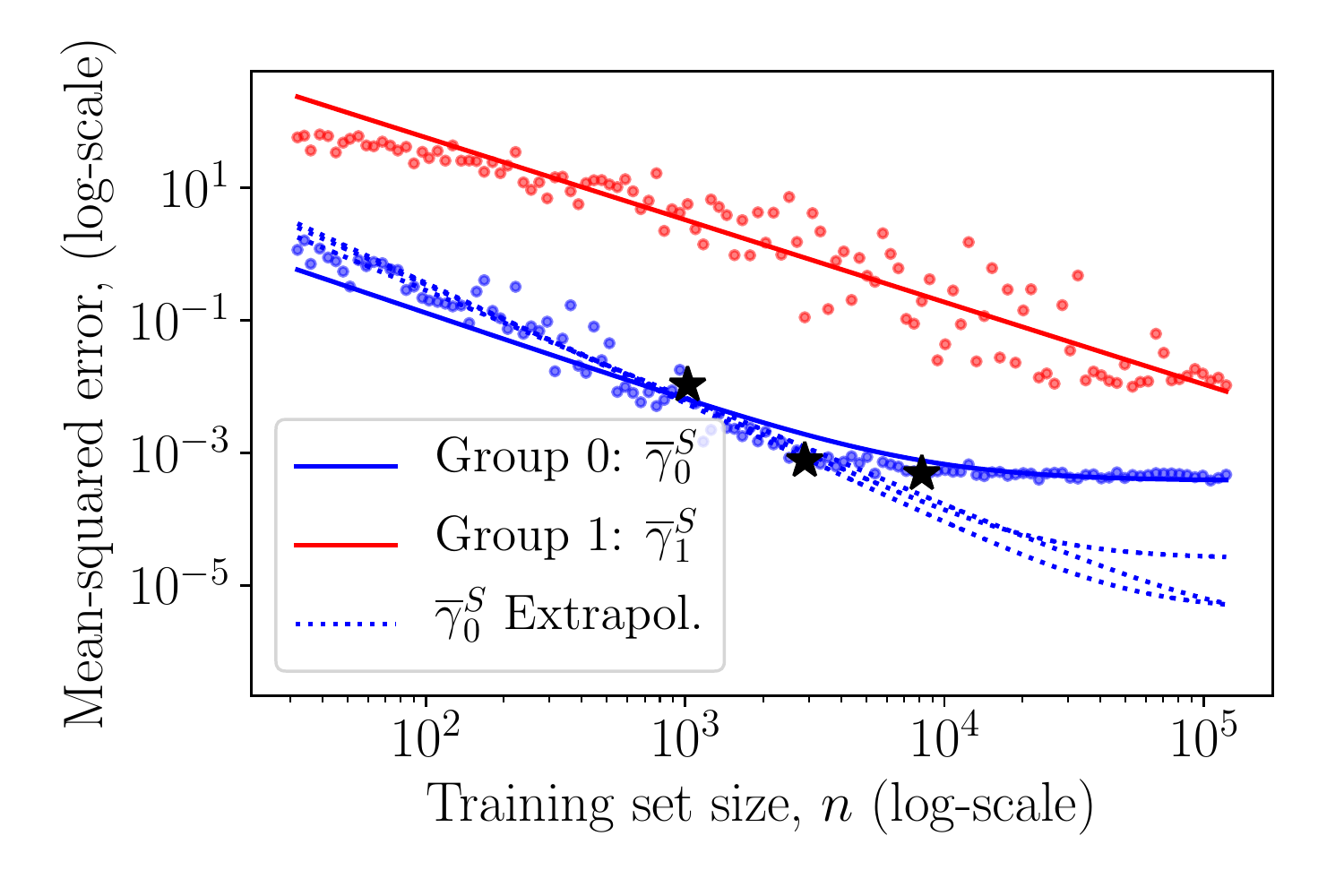}%
\vspace{-.1in}
\caption{Learning curves, $\olgamma_0, \olgamma_1$ for random forest}
\end{subfigure}
\;
\begin{subfigure}[b]{.47\textwidth}
\centering
\includegraphics[width=1\textwidth]{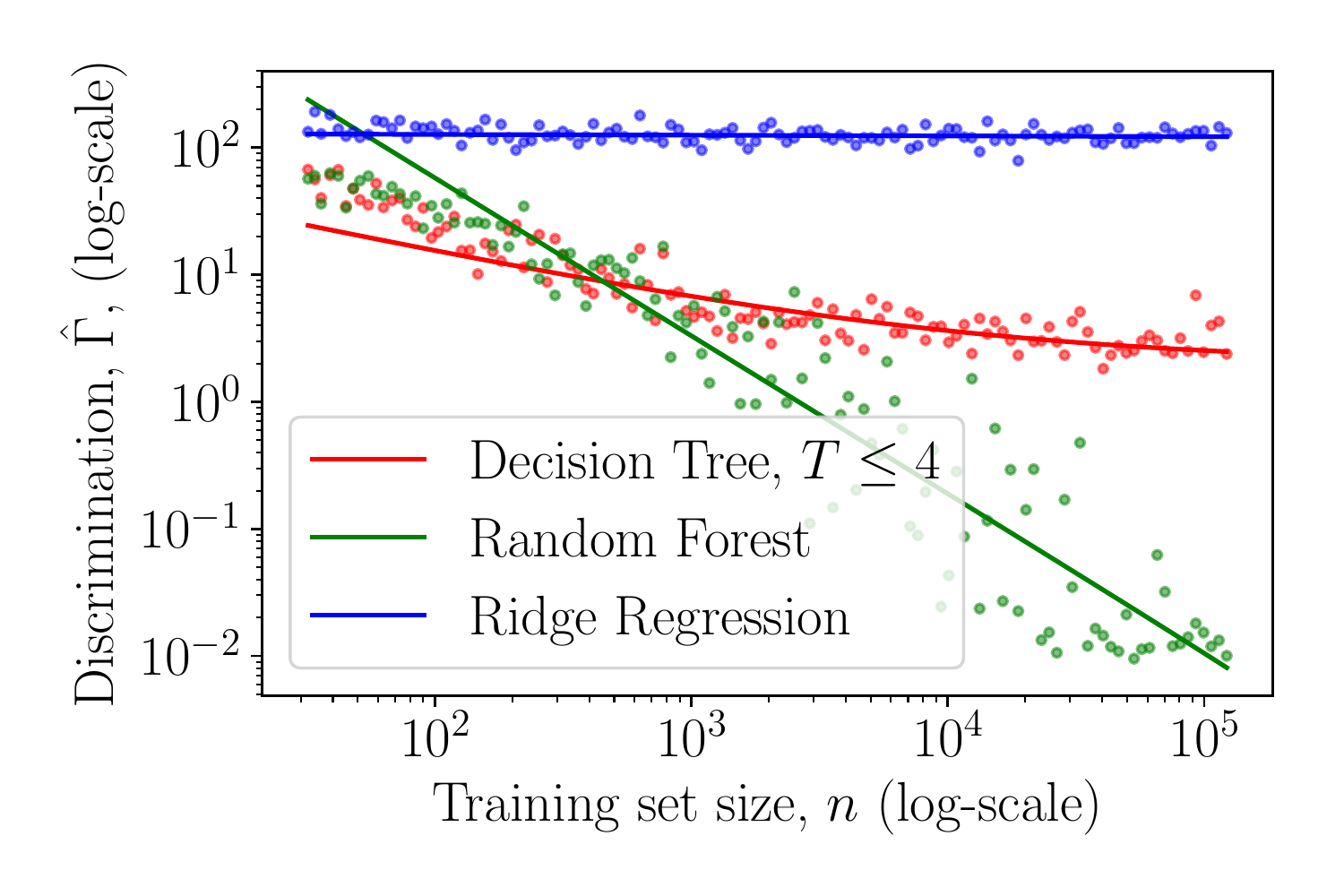}%
\vspace{-.1in}
\caption{Discrimination, $\olGamma = |\olgamma_0 - \olgamma_1|$ for various models}
\end{subfigure}
\caption{\label{fig:learning_curves}Inverse power-laws (Pow3) fit to generalization error as a function of training set size on synthetic data. Dotted lines are extrapolations from sample sizes indicated by black stars. This illustrates the difficulty of estimating the Bayes error through extrapolation, here at $\overline{N}_0 = 3\cdot 10^{-4}$ and $\overline{N}_1 = 7\cdot 10^{-3}$ respectively.}
\end{figure}

\subsection{Book review ratings}
\begin{figure*}[tbph!]
\centering
\begin{subfigure}[b]{.45\textwidth}
\centering
\includegraphics[width=1.\textwidth]{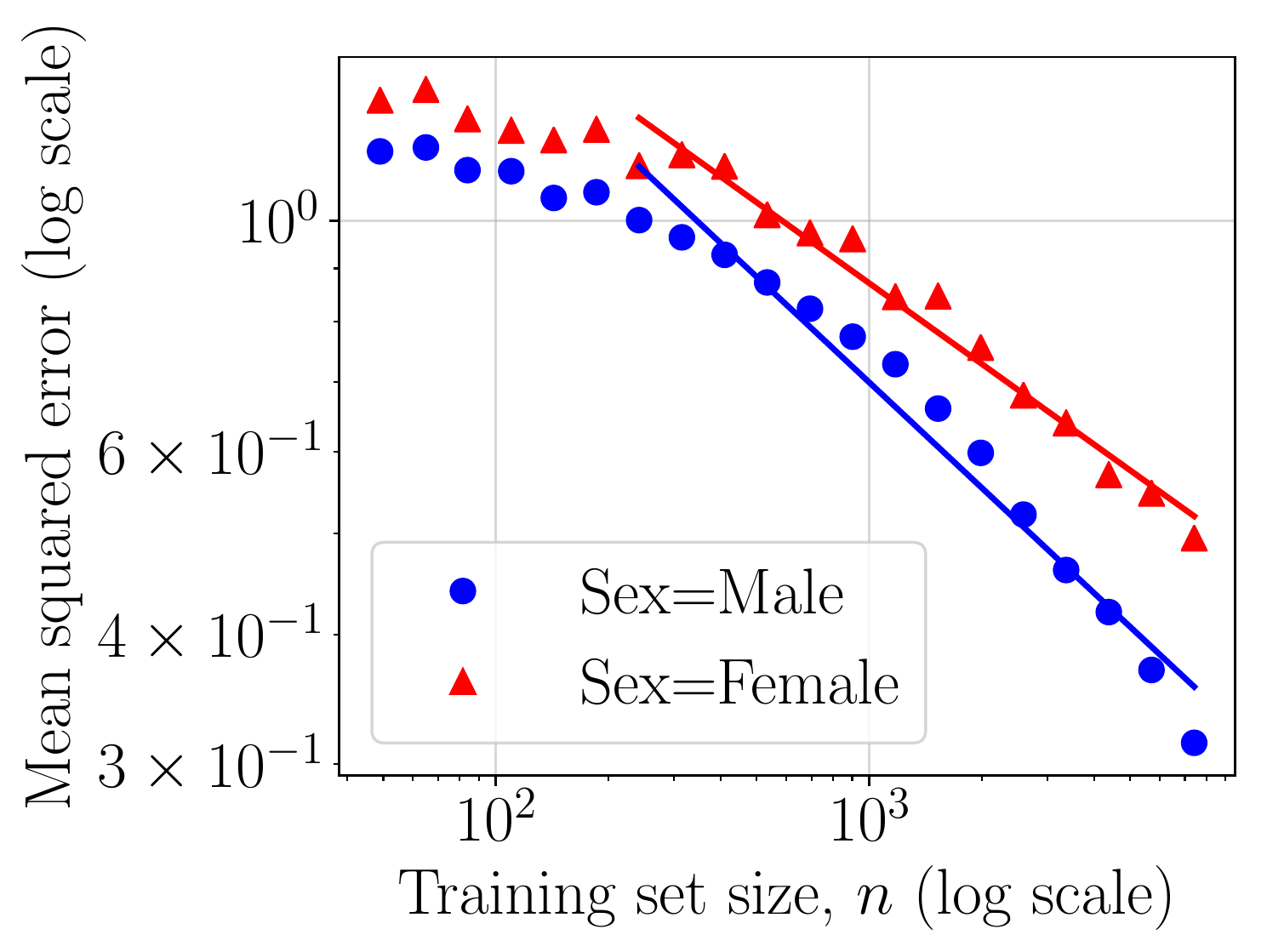}
\caption{\label{fig:books} As training set size increases for random forest, $\mse$ decreases but maintains difference between groups. Intercepts from fitted power-laws show no difference in noise.}
\end{subfigure}
\;
\begin{subfigure}[b]{.45\textwidth}
\centering
\includegraphics[width=1.\columnwidth]{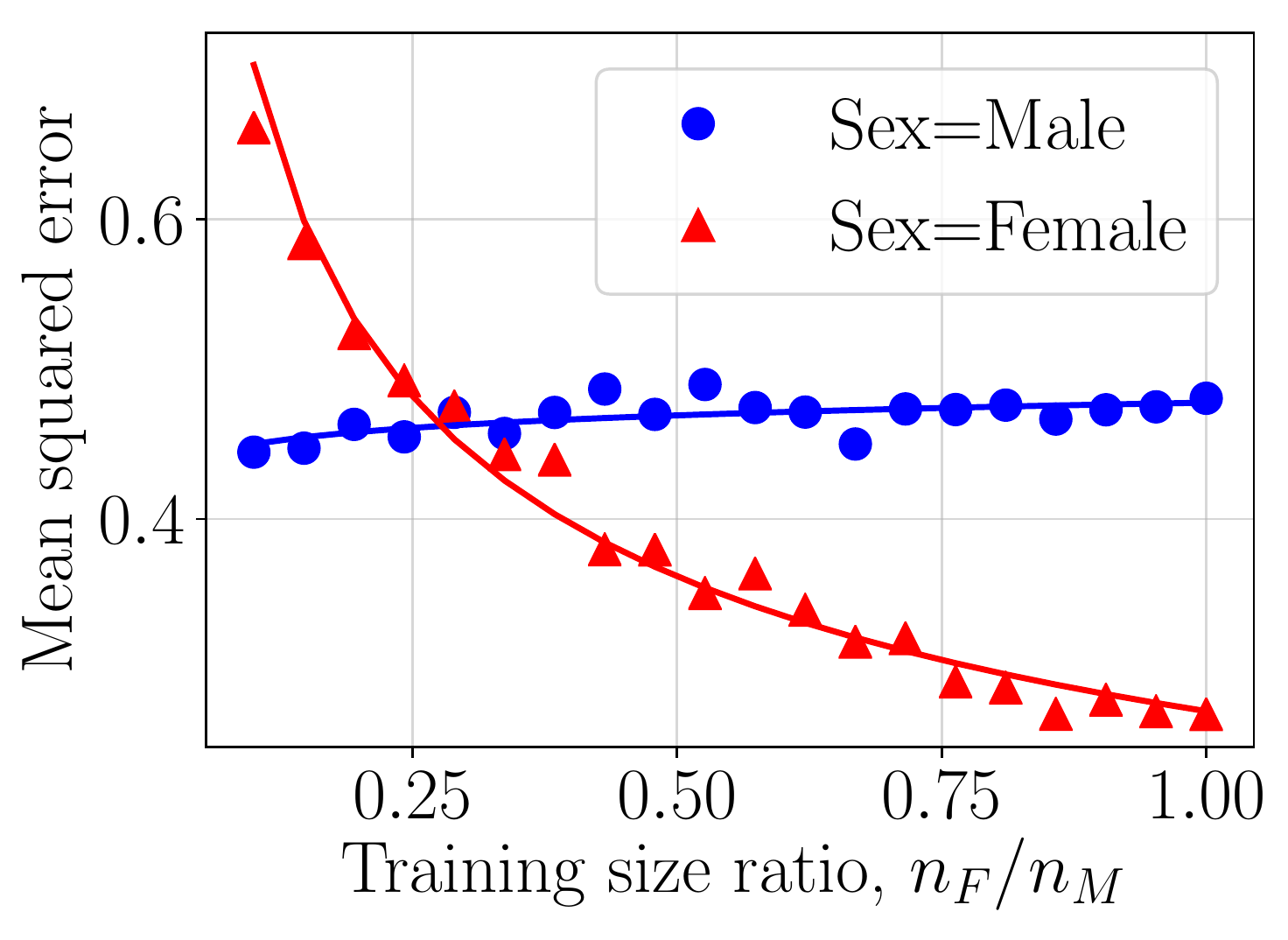}
\caption{\label{fig:books_add_onesided} Holding number of reviews for male authors $n_M$ steady and varying number of reviews for female authors $n_F$, we can achieve higher $\mse$ for one group than with the full dataset. }
\end{subfigure}
\caption{\label{fig:exp_books} Goodreads dataset for book rating prediction. Adding training data decreases overall mean squared error ($\mse$) for both groups while adding training data to only one group has a much bigger impact on reducing $\overline{\Gamma}$. Increasing the number of features reduces $\mse$ but does not reduce $\overline{\Gamma}$.}
\end{figure*}

Sentiment and rating prediction from text reveal quantitative insights from unstructured data; however deficiencies in algorithmic prediction may incorrectly represent populations. Using a dataset of 13,244 reviews collected from Goodreads \citep{goodreads} with inferred author sex scraped from Wikipedia, we seek to predict the review rating based on the review text. We use as features the Tf-Idf statistics of the 5000 most frequent words. Our protected attribute is gender of the author of the book, and the target attribute is the rating (1-5) of the review. The data is heavily imbalanced, with 18\% reviews about female authors versus 82\% reviews about male authors.

We observe statistically significant levels of discrimination with respect to mean squared error ($\mse$) with linear regression, decision trees and random forests. Using a random forest and training on 80\% of the dataset and testing on 20\%, we find that our $\Gamma^{\scmse}(\hY)$ has 95\%-confidence interval $0.136 \pm 0.048$ with $\mse_M = 0.224$ for reviews for male authors and $\mse_F = 0.358$ for reviews for female authors using a difference in means statistical test. Results were found after hyperparameter turning for each training set size and taking an average over 50 trials. We observe similar patterns with linear regression and decision trees.

To estimate the impact of additional training data, we evaluate the effect of varying training set size $n$ on predictive performance and discrimination. Through repeated sample spitting, we train a random forest on increasing training set sizes, reserving at least 20\% of the dataset for testing. In Figure~\ref{fig:books}, additional training data lowers $\mse_F$ and $\mse_M$, fitting an inverse power-law. Based on the intercept terms of the extrapolated power-laws ($\delta_M = 0.0011$ for reviews with male authors and $\delta_F = 0.0013$ for reviews with female authors), we may expect that $\overline{\Gamma}$ can be explained more by differences in bias and variance than by noise since our estimated difference in noise $|\delta_F - \delta_M| \approx 0$.

In order to further measure the effect of collecting more samples, we analyze a one-sized increase in training data. Because of the initial skew of author genders in the dataset, we vary the number of reviews for female authors, creating a shift in populations in the training data. We fix the training set size of reviews for male authors at $n_M = 1939$, which represents the size of the full data for female authors $N_F$, reserving 20\% of the dataset as test data. We then vary the training data size for female authors $n_F$ such that the ratio $n_F / n_M$ varies evenly between 0.1 to 1.0. Using a linear regression in Figure~\ref{fig:books_add_onesided}, we see that as the ratio $n_F / n_M$ increases, $\mse_F$ decreases far below $\mse_M$ and far below our best reported $\mse$ of the random forest on the full dataset. This suggests that shifting the data ratio and collecting more data for the under-represented group can adapt our model to reduce discrimination.

\subsection{Clinical notes}


Here we include additional details about topic modeling. Topics were sampled using Markov Chain Monte Carlo after 2,500 iterations. We present the topics with highest and lowest variance in error rates among groups in Table~\ref{tab:topics}. Error rates were computed using a logistic regression with L1 regularization over 10,000 TF-IDF features using 80/20 training and testing data split over 50 trials. Based on the most representative words for each topic, we can infer topic descriptions, for example cancer patients for topic 48 and cardiac patients for topic 45.

\begin{table}[h!]
\centering
 \begin{tabular}{c L c c c c c} 
 Topic & Top words &  Asian & Black & Hispanic & Other & White \\ 
 \hline \hline
31 & no(t pain present normal edema tube history pulse absent left respiratory monitor & 5.9 & 8.4 & 17.6 & 30.8 & 11.1\\
\hline
17 & hospital lymphoma continue s/p unit bmt thrombocytopenia line rash & 34.3 & 13.6 & 34.9 & 30.2 & 26.0\\
\hline
43 & bowel abdominal abd abdomen surgery s/p small pain obstruction fluid ngt & 16.6 & 11.8 & 5.7 & 26.8 & 13.2\\
\hline
45 & artery carotid aneurysm left identifier numeric vertebral internal clip & 5.4 & 5.3 & 3.8 & 20.4 & 10.0\\
\hline
48 & mass cancer metastatic lung tumor patient cell left malignant breast hospital & 21.6 & 25.4 & 12.3 & 30.2 & 18.5\\
\hline \hline
1 & neo gtt pain resp neuro wean clear plan insulin good & 3.3 & 1.8 & 1.6 & 3.6 & 2.7\\
\hline
2 & assessment insulin mg/dl plan pain meq/l mmhg chest cabg action & 0.3 & 0.6 & 0.9 & 3.6 & 2.2\\
\hline
0 & chest reason tube clip left artery s/p pneumothorax cabg pulmonary  & 3.2 & 5.5 & 2.5 & 5.6 & 4.0\\
\hline
25 & c/o pain clear denies oriented sats plan alert stable monitor & 7.3 & 3.9 & 5.9 & 8.2 & 6.5\\
\hline
47 & pacer pacemaker icd s/p paced rhythm ccu amiodarone cardiac & 8.2 & 9.1 & 8.3 & 13.8 & 10.1\\
\hline
 \end{tabular}
 \caption{Top and bottom 5 topics (of 50) based on variance in error rates of groups. Error rates by group and topic $p(\hat{Y} \neq Y | K, A)$ are reported in percentages.}
 \label{tab:topics}
\end{table}

We identified patients with notes corresponding to topic 48, corresponding to cancer, as a subpopulation with large differences in errors between groups. By varying the training size while saving 20\% of the data for testing, we estimate that more data would not be beneficial for decreasing error (see Figure~\ref{fig:cancer}). The mean over 50 trials is reported with hyperparameters chosen for each training size. Instead, we recommend collecting more features (e.g. structured data from lab results, more detailed patient history) as a way of improving error for this subpopulation.

Furthermore, we compute the 95\% confidence intervals for false positive and false negative rates for a logistic regression with L1 regularization in Figure~\ref{fig:notes_fn} and Figure~\ref{fig:notes_fp}.

\begin{figure*}[tbp!]
\includegraphics[width=0.9\textwidth]{figs/legend.pdf}
\centering 
\begin{subfigure}[b]{.31\textwidth}
\centering
\includegraphics[width=1\columnwidth]{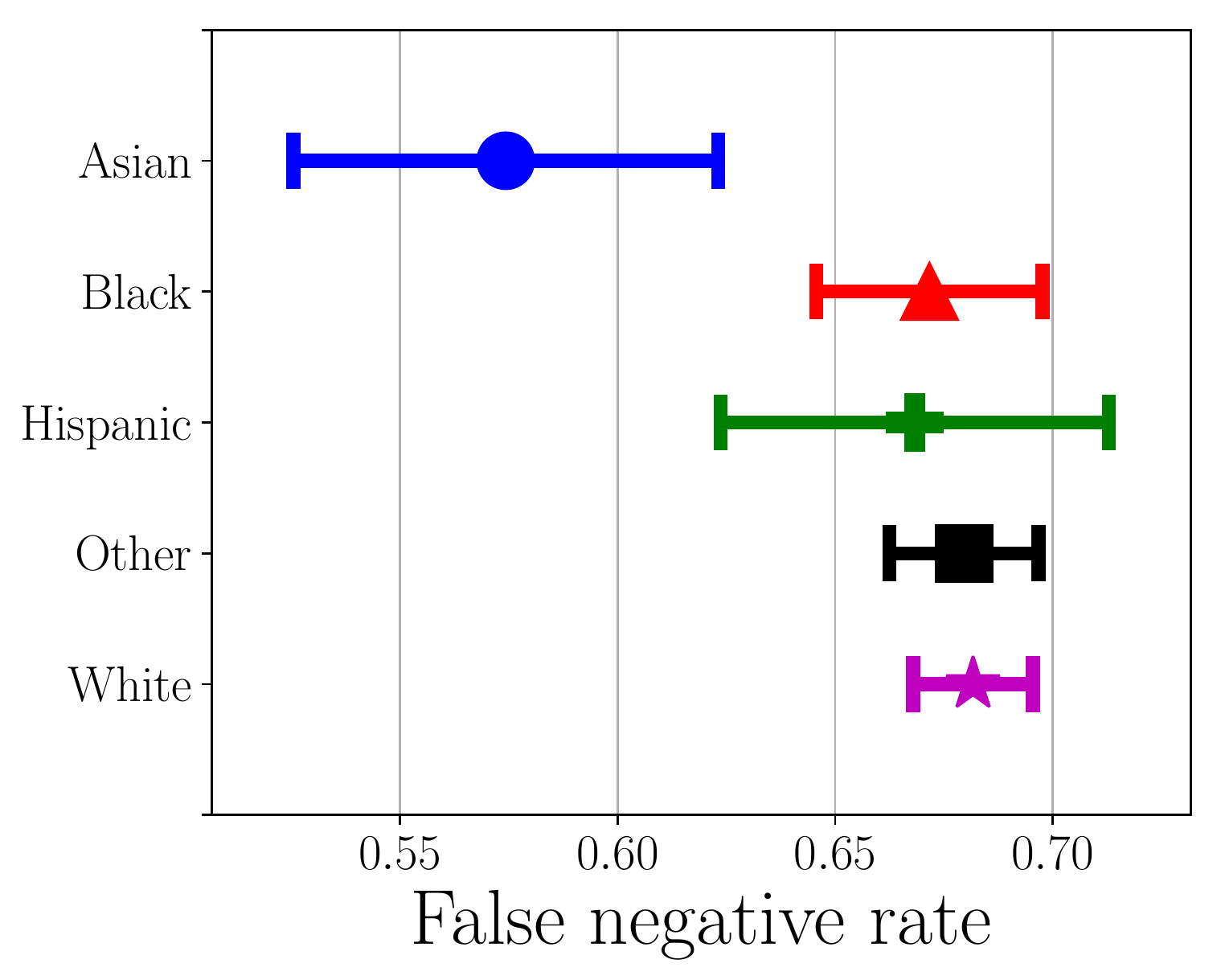}
\caption{\label{fig:notes_fn} The false negative rates for logistic regression with L1 regularization do not differ across five ethnic groups, shown by the overlapping 95\%-confidence intervals intervals, except for Asian patients.}
\end{subfigure}
\;
\begin{subfigure}[b]{.31\textwidth}
\centering
\includegraphics[width=1.\columnwidth]{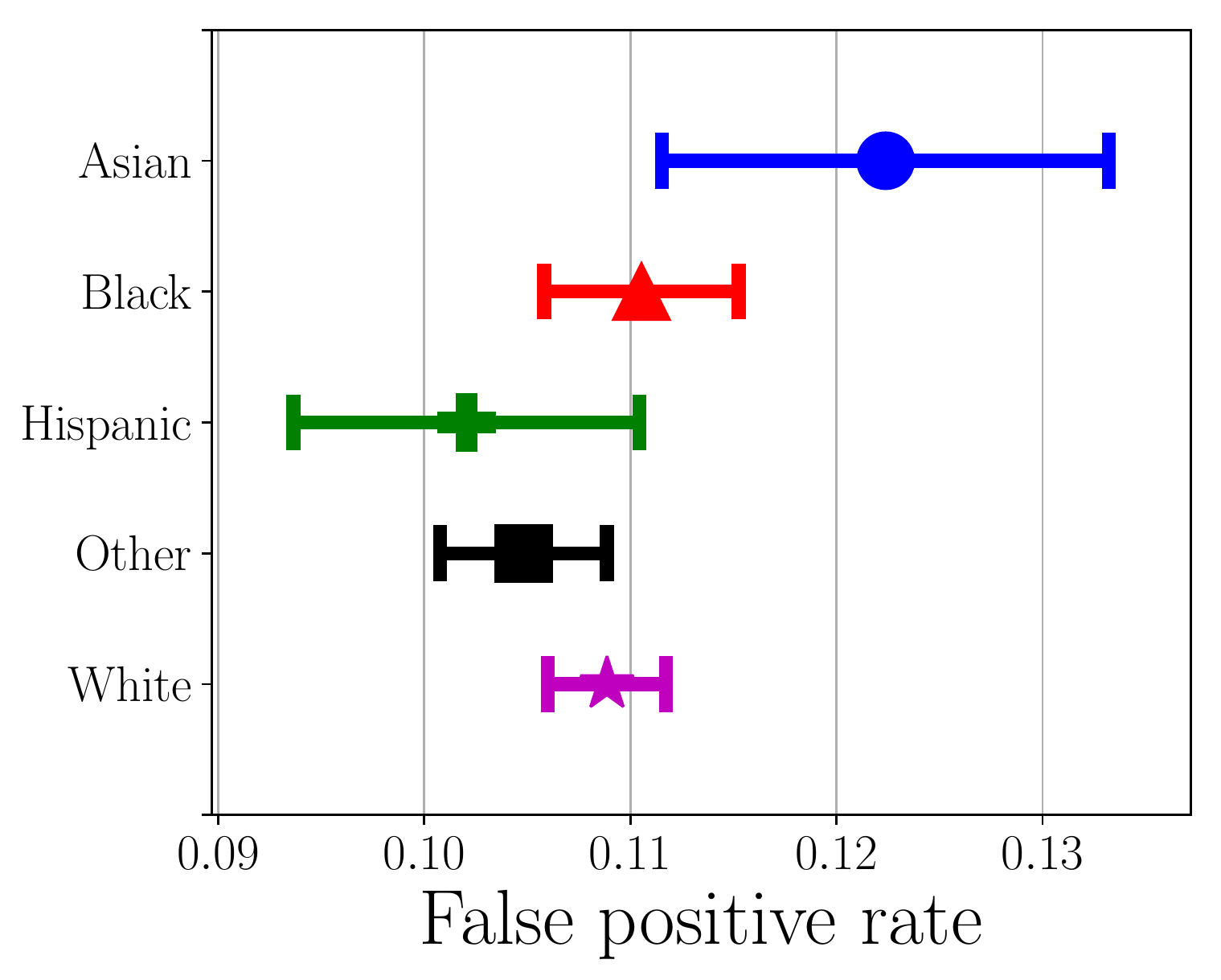}
\caption{\label{fig:notes_fp} The false positive rates also does not differ much across groups with many overlapping intervals. Note that Asian patients have high false positive rate but low false negative rates.}
\end{subfigure}
\;
\begin{subfigure}[b]{.31\textwidth}
\includegraphics[width=1.\textwidth]{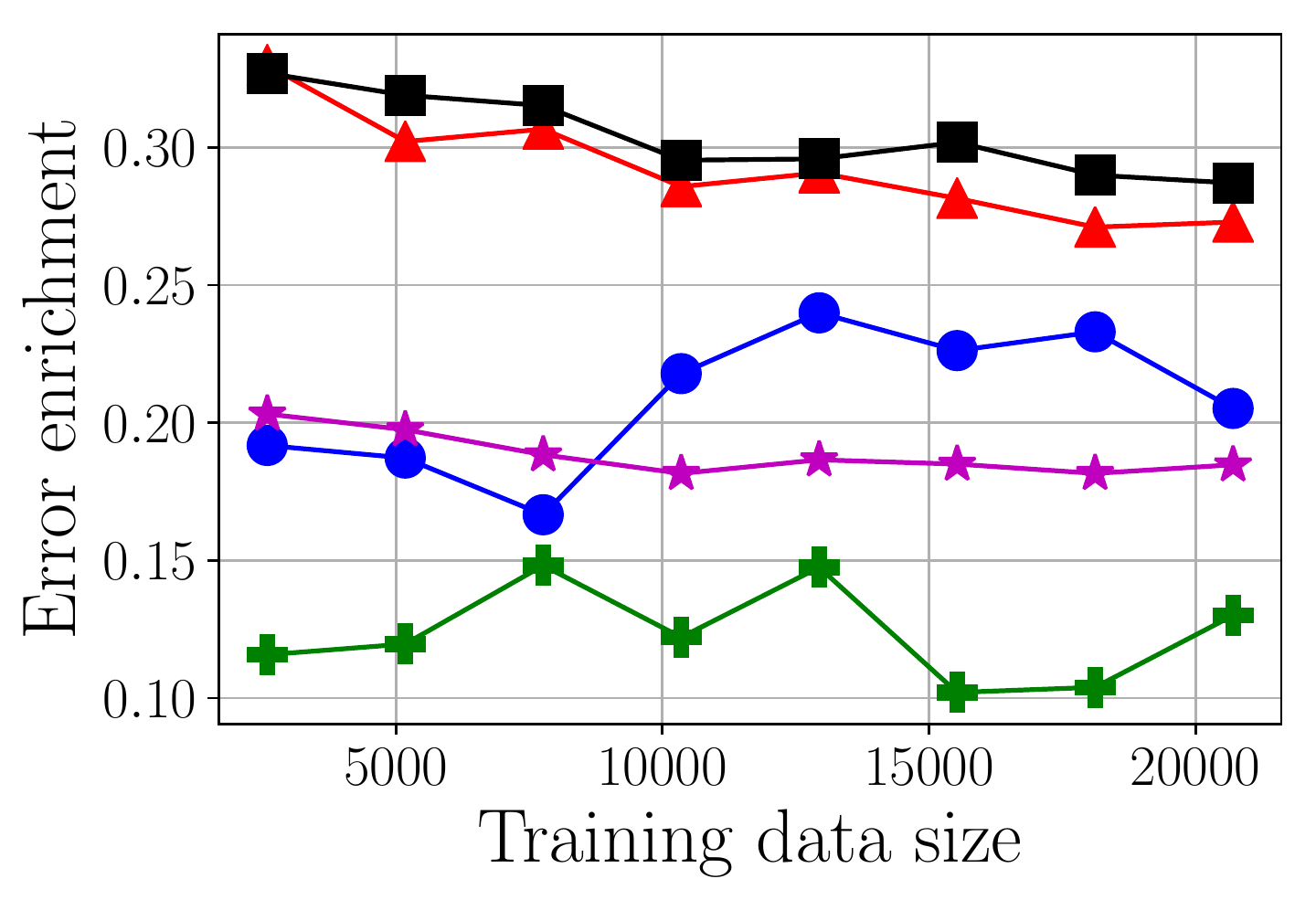}
\caption{\label{fig:cancer} Adding training data size on error enrichment for cancer (topic 48) does not necessarily reduce error for all groups. This may suggest we should focus on collecting more features instead.}
\end{subfigure}
\caption{\label{fig:notes} Additional clinical notes experiments highlight the differences in false positive and false negative rates. We also examine the effect of training size on cancer patients in the dataset.}
\end{figure*}

\section{Exploring model choice}
If a difference in bias is the dominating source of discrimination between groups, changing the class of models under consideration could have a large impact on discrimination.Consider for example Figure 1c in which the true outcome has higher complexity in regions where one protected group is more densely distributed than the other. Increasing model capacity in such cases, or exploring other model classes of similar capacity, may reduce as long as the bias-variance trade-off is beneficial. Bias is not identifiable in general, as this requires estimation or bounding of noise components $N_a$, or an assumption that they are equal, $\overline{N}_0 = \overline{N}_1$, or negligible, $\overline{N}_a \approx 0$. However, as noise is in-dependent of model choice, a difference in bias of different models is identifiable even if the noise is not known, provided that the variance is estimated. With $\Delta \overline{B} = \overline{B}_0 - \overline{B}_1$, and $\Delta \overline{V} = \overline{V}_0 - \overline{V}_1$, and $\hat{Y}, \hat{Y}'$, two predictors for comparison, we may test the hypothesis $H_0: \Delta \overline{B}(\hat{Y}) + \Delta \overline{V}(\hat{Y}) = \Delta \overline{B}(\hat{Y}') + \Delta \overline{V}(\hat{Y}')$.

\section{Regression with homoskedastic noise}
By definition of $\olN$, we can state the following result.
\begin{thmprop}
Homoskedastic noise, i.e. $\forall x\in \cX, a\in \cA : N(x,a) = N$, does not contribute to discrimination level $\olGamma$ under the squared loss $L(y, y') = (y - y')^2$.
\end{thmprop}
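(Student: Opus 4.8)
The plan is to read off the bias--variance--noise decomposition of Theorem~\ref{thm:bvd} and observe that noise enters the expression for $\olGamma$ only through the \emph{difference} $\overline{N}_0 - \overline{N}_1$; homoskedasticity forces the two group-specific noise terms to coincide, so this difference vanishes and the remaining expression depends only on bias and variance. The work is therefore entirely in reducing $\overline{N}_a$ to a spatial average of the pointwise noise $N(x,a)$ and then applying the assumption.

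First I would specialize Theorem~\ref{thm:bvd} to the squared loss $L(y,y') = (y-y')^2$, for which the stated noise coefficient is $c_n(x,a) = 1$. The theorem then gives
$$
\olGamma = \left|(\overline{N}_0 - \overline{N}_{1}) + (\overline{B}_0 - \overline{B}_{1}) + (\overline{V}_0 - \overline{V}_{1}) \right|,
\qquad
\overline{N}_a = \E_{X}[L(y^*(X,a), Y) \mid A=a]~.
$$
Next I would integrate out the random outcome $Y$: conditioning on $X$ and using the definition $N(x,a) = \E_Y[L(y^*(x,a), Y)\mid X=x, A=a]$ together with the tower property yields
$$
\overline{N}_a = \E_X\!\left[\, \E_Y[L(y^*(X,a),Y)\mid X, A=a] \,\middle|\, A=a \right] = \E_X[N(X,a)\mid A=a]~.
$$

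I would then apply the homoskedasticity hypothesis $N(x,a) = N$ for all $x\in\cX, a\in\cA$: the integrand is the constant $N$, so $\overline{N}_a = N$ for each $a\in\cA$, and hence $\overline{N}_0 - \overline{N}_1 = 0$. Substituting back leaves $\olGamma = |(\overline{B}_0 - \overline{B}_1) + (\overline{V}_0 - \overline{V}_1)|$, an expression free of $N$, which is exactly the claim that homoskedastic noise does not contribute to the discrimination level.

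I do not expect a genuine obstacle here—the result is essentially immediate once the decomposition is in hand. The only step requiring any care is the reduction $\overline{N}_a = \E_X[N(X,a)\mid A=a]$, where one must correctly use $c_n(x,a)=1$ for the squared loss and integrate out $Y$ before invoking the assumption; conflating the pointwise noise $N(x,a)$ with the aggregate $\overline{N}_a$ is the one place a sloppy argument could go wrong.
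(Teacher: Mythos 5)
Your proof is correct and follows essentially the same route as the paper's: both observe that for the squared loss $c_n(x,a)=1$, so $\overline{N}_a = \E_X[N(X,a)\mid A=a] = N$ for every group, and the noise difference in the decomposition of $\olGamma$ from Theorem~1 vanishes. Your write-up merely makes explicit the tower-property step that the paper's one-line proof leaves implicit.
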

\begin{proof}
Under the squared loss, $\forall a: \olN_a = \E_X[N(X,a)] = N$, as $c_n(x,a) = 1$. 
\end{proof}
In contrast, for the zero-one loss and class-specific variants, the expected noise terms $\overline{N}_a$ do not cancel, as they depend on the factor $c_n(x, a)$.

\section{Bias-variance decomposition. Proof of Theorem~1.}

\begin{thmapplem}[Squared loss and zero-one loss]\label{lem:appzo2}
The following claim holds for both:\\
a) $L(y,y') = \mathds[y \neq y']$ the zero-one loss with $c_1(x,a)  = 2\mathbb{E}[\mathds{1}[\hat{Y}_D(x,a) = \hy_*(x,a)]]-1$ and $c_2(x,a) = \{1, \mbox{ if } \hy^*(x,a) = \hy^m(x,a); -1 \mbox{ otherwise}\}$,\\ 
b) a) $L(y,y') = (y-y')^2$ the squared loss with $c_1(x,a) = c_2(x,a) = 1$.
\begin{align*}
\E[L(Y,\hY_D)\mid X=x, A=a] & = c_1(x,a)\E[L(y,\hY^*)\mid x, a] \\
& + L(\hy^m(x,a), \hy^*(x,a)) + c_2\E[L(\hy^m(x,a), \hY_D) \mid x, a]~.
\end{align*}
\end{thmapplem}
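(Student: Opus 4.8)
The plan is to fix the point $(x,a)$ once and for all, suppress it from the notation, and treat the two losses separately, since the main and optimal predictions $\hy^m$ and $\hy^*$ are defined differently in the two cases (conditional mean versus majority vote). I would use that, conditionally on $(x,a)$, the outcome $Y$ and the learned prediction $\hY_D$ are independent, so the target $\E[L(Y,\hY_D)\mid x,a]$ is a joint expectation over these two independent sources of randomness. With this convention the three right-hand summands are exactly the noise $N=\E_Y[L(Y,\hy^*)]$, the bias $B=L(\hy^m,\hy^*)$, and the variance $V=\E_D[L(\hy^m,\hY_D)]$, each multiplied by its coefficient.

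For the squared loss (part b) the identity is purely algebraic. I would write $Y-\hY_D=(Y-\hy^*)+(\hy^*-\hy^m)+(\hy^m-\hY_D)$, expand the square, and take the expectation over $Y$ and $D$. Because $\hy^*=\E_Y[Y]$ and $\hy^m=\E_D[\hY_D]$ are the respective conditional means (this is exactly how the optimal and main predictions are defined for the squared loss), every cross term carrying a factor $(Y-\hy^*)$ or $(\hy^m-\hY_D)$ has zero mean and drops out, leaving $N+B+V$, i.e.\ $c_1=c_2=1$. No case analysis is needed here.

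For the zero-one loss (part a) I would split on whether the model is biased at $(x,a)$, i.e.\ on the value of $B=\mathds{1}[\hy^m\neq\hy^*]$. Since $Y,\hY_D\in\{0,1\}$, the event $\{Y\neq\hY_D\}$ occurs precisely when exactly one of $Y,\hY_D$ equals $\hy^*$; expanding $\Pr(Y\neq\hY_D)$ by independence into products of $N=\Pr(Y\neq\hy^*)$ and $V=\Pr(\hY_D\neq\hy^m)$ gives $N+(1-2N)V$ in the unbiased case $\hy^m=\hy^*$ and $1-N-V+2NV$ in the biased case $\hy^m\neq\hy^*$. I would then verify that $c_1N+B+c_2V$ reproduces each expression: in the unbiased case $B=0$, $c_2=1$, and $\Pr(\hY_D=\hy^*)=1-V$, so $c_1=1-2V$; in the biased case $B=1$, $c_2=-1$, and $\Pr(\hY_D=\hy^*)=V$, so $c_1=2V-1$. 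Substituting and expanding matches the two expressions term for term.

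The main obstacle is the nonlinear coupling between outcome noise and prediction variance that appears only for the zero-one loss: unlike the squared loss, where independence together with the zero-mean property annihilates every interaction, the indicator produces a genuine $\pm 2NV$ term whose sign flips with the bias. The crux is to show that this interaction is absorbed exactly by the single closed-form noise coefficient $c_1=2\Pr(\hY_D=\hy^*)-1$, which is why $c_1$ is written uniformly in terms of $\Pr(\hY_D=\hy^*)$ rather than by case; checking that this one formula collapses to $1-2V$ and $2V-1$ in the two regimes, and that the resulting totals match, is the content of the argument, and is precisely the binary specialization of the decomposition of \citet{domingos2000unified}.
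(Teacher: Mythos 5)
Your proposal is correct, and it is genuinely more than what the paper provides: for this lemma the paper gives no argument at all, simply deferring with ``See \citet{domingos2000unified}.'' Your squared-loss part is the standard three-term expansion with vanishing cross terms, which is exactly Domingos's argument. For the zero-one loss, however, your route differs from Domingos's (and from the style the paper uses for the neighboring class-conditional Lemma~A2): rather than establishing pointwise loss identities such as $L(y,\hY_D) = L(\hy^*,\hY_D) + c_0(x,a)L(y,\hy^*)$ and taking expectations of signed terms, you exploit the binary structure directly --- conditional independence of $Y$ and $\hY_D$ given $(x,a)$ lets you write $\Pr(Y \neq \hY_D) = (1-N)\Pr(\hY_D \neq \hy^*) + N\Pr(\hY_D = \hy^*)$ and check the two bias regimes, and your arithmetic in both cases ($N + (1-2N)V$ unbiased, $1 - N - V + 2NV$ biased, with $c_1 = 2\Pr(\hY_D = \hy^*)-1$ collapsing to $1-2V$ and $2V-1$ respectively) is exact. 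What the pointwise-identity approach buys is generality: it extends verbatim to the class-conditional losses of Lemma~A2 and to Domingos's broader class of losses, whereas your direct probability computation is tied to binary outcomes. What your approach buys is a self-contained, elementary verification that makes the $\pm 2NV$ interaction term --- and why it is absorbed by the single coefficient $c_1$ --- completely explicit, which the paper's citation leaves opaque.
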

\begin{proof}See \citet{domingos2000unified}.\end{proof}

\begin{thmapplem}[Class-specific zero-one loss]\label{lem:appclass}
With $L(y,y') = \mathds[y \neq y']$ the zero-one loss, it holds with $c_1(x,a)  = 2\mathbb{E}[\mathds{1}[\hat{Y}_D(x,a) = \hy_*(x,a)]]-1$ and $c_2(x,a) = \{1, \mbox{ if } \hy^*(x,a) = \hy^m(x,a); -1 \mbox{ otherwise}\}$
\begin{align*}
\forall y\in \{0,1\} : \E[L(y,\hY_D)\mid X=x, A=a] = \\
c_1(x,a)L(y,\hY^*) + L(\hy^m(x,a), \hy^*(x,a)) + c_2\E[L(\hy^m(x,a), \hY_D) \mid x, a]~.
\end{align*}
\end{thmapplem}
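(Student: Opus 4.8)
The plan is to reduce the statement to an elementary case analysis that exploits the fact that, for the zero-one loss, all of $y$, $y^*(x,a)$, $\hy^m(x,a)$ and $\hY_D(x,a)$ take values in $\{0,1\}$. I fix a point $(x,a)$ and a class label $y\in\{0,1\}$ throughout, abbreviate $y^* = y^*(x,a)$ and $\hy^m = \hy^m(x,a)$, and introduce $p_m := \E_D[\mathds{1}[\hY_D = \hy^m]]$ and $p_* := \E_D[\mathds{1}[\hY_D = y^*]]$, so that the variance is $V = \E_D[L(\hy^m,\hY_D)] = 1 - p_m$ and the coefficient of the noise term is $c_1 = 2p_* - 1$. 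Because $\hY_D$ is binary, conditioning on $Y=y$ at a fresh test point does not change the distribution of $\hY_D$, so the left-hand side is simply $\E_D[\mathds{1}[\hY_D\neq y]]$.

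First I would rewrite this left-hand side directly. Since $\hY_D$ equals either $\hy^m$ (with probability $p_m$) or $1-\hy^m$ (with probability $V$), and $\mathds{1}[(1-\hy^m)\neq y] = 1 - \mathds{1}[\hy^m\neq y]$, conditioning on the two outcomes gives
\[
\E_D[L(y,\hY_D)] = p_m\,\mathds{1}[\hy^m\neq y] + V\,(1-\mathds{1}[\hy^m\neq y]) = V + (2p_m-1)\,\mathds{1}[\hy^m\neq y].
\]
This already isolates the variance $V$ and a term recording whether the main prediction errs on the fixed label $y$; the remaining work is to show that $(2p_m-1)\mathds{1}[\hy^m\neq y]$ equals $c_1 L(y,y^*) + L(\hy^m,y^*)$ under the stated definition of $c_2$.

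Next I would split into the two cases that determine $c_2$. When $\hy^m = y^*$ (so $c_2 = 1$), agreement with the main prediction coincides with agreement with the Bayes prediction, hence $p_* = p_m$ and $\mathds{1}[\hy^m\neq y] = \mathds{1}[y\neq y^*]$; substituting into the display and into the claimed right-hand side (whose bias term $L(\hy^m,y^*)$ vanishes) shows both sides equal $V + (2p_m-1)\,\mathds{1}[y\neq y^*]$. When $\hy^m \neq y^*$ (so $c_2 = -1$ and $\hy^m = 1-y^*$), the binary structure forces $p_* = 1-p_m = V$, whence $2p_*-1 = -(2p_m-1)$, and $\mathds{1}[\hy^m\neq y] = 1 - \mathds{1}[y\neq y^*]$; expanding both the display and the right-hand side—now carrying bias term $1$ and variance coefficient $-1$—and simplifying shows they agree again.

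The main obstacle is this second case: it is where the sign of the noise coefficient $c_1$ flips relative to the variance, and where the choice $c_2 = -1$ together with the nonzero bias term must conspire to cancel. The key identity making it work is $p_* = 1-p_m$ when $\hy^m\neq y^*$—that is, disagreeing with the main prediction is the same event as agreeing with the Bayes prediction—so that $2p_*-1 = -(2p_m-1)$; tracking this against $\mathds{1}[\hy^m\neq y] = 1-\mathds{1}[y\neq y^*]$ is the crux of the bookkeeping. Once both cases are verified, the identity holds for each $y\in\{0,1\}$ and each $(x,a)$, and as a consistency check, averaging over the random outcome $Y$ collapses $c_1 L(y,y^*)$ into the noise term $c_1\E_Y[L(Y,y^*)]$ and recovers Lemma~\ref{lem:appzo2}, matching Domingos' unconditional zero-one decomposition.
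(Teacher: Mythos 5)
Your proof is correct, and it takes a recognizably different route from the paper's, even though both are elementary case analyses exploiting binary labels. The paper follows the template of \citet{domingos2000unified} and pivots on the Bayes prediction: it first proves the pointwise identity $L(y,\hY_D) = L(\hy^*,\hY_D) + c_0(D)\,L(y,\hy^*)$ with a per-dataset sign $c_0(D) = 2\mathds{1}[\hY_D = \hy^*]-1$, then proves $\E_D[L(\hy^*,\hY_D)] = L(\hy^*,\hy^m) + c_2\,\E_D[L(\hy^m,\hY_D)]$ by cases on whether $\hy^m = \hy^*$, and finally combines the two, identifying $c_1 = \E_D[c_0(D)]$. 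You instead pivot on the main prediction: parametrizing the law of the binary variable $\hY_D$ by the single number $p_m$, you compute the left-hand side in closed form as $V + (2p_m-1)\mathds{1}[\hy^m\neq y]$ and then verify the claimed identity in the two cases fixed by $c_2$, the crux being $p_* = 1-p_m$ when $\hy^m \neq \hy^*$. The underlying facts are identical---your identity $p_* = 1-p_m$ is precisely the paper's observation that agreeing with $\hy^*$ is the same event as disagreeing with $\hy^m$, and your coefficient $2p_*-1$ is the paper's $\E_D[c_0(D)]$---but your argument is a compact, self-contained verification that makes the sign flip in the $\hy^m\neq\hy^*$ case fully explicit (and your closing consistency check, averaging over $Y$ to recover Lemma~\ref{lem:appzo2}, is a nice addition), whereas the paper's derivation is constructive, exhibiting where each of the three terms originates in a way that parallels the unconditional decomposition. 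One small repair to your write-up: the claim that conditioning on $Y=y$ at the test point leaves the law of $\hY_D$ unchanged holds because the training set $D$ is drawn independently of the test point---the same independence the paper invokes when assembling Theorem~\ref{thm:bvd}---not because $\hY_D$ is binary.
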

\begin{proof}
We begin by showing that $L(y, \hY_D(x,a)) = L(\hy^*(x,a), \hY_D(x,a)) + c_0(x,a)L(y, \hy^*(x,a))$ with $c_0(x,a) = \{+1, \mbox{ if } \hy^*(x,a) = \hY_D(x,a); -1, \mbox{otherwise}\}$.
\begin{align*}
L(y, \hY_D) - L(\hy^*(x,a), \hY_D(x,a)) + c_0(x,a)L(y, \hy^*(x,a)) \\
= \left\{\begin{array}{ll}
0,& \mbox{ if } \hY_D(x,a) = \hy^*(x,a) = 0\\
-1-c_0(x,a),& \mbox{ if } \hY_D(x,a) = 0, \hy^*(x,a) = 1\\
0,& \mbox{ if }  \hY_D(x,a) = 1, \hy^*(x,a) = 0\\
1-c_0(x,a),& \mbox{ if }  \hY_D(x,a) = \hy^*(x,a) = 1\\
\end{array}\right.
\end{align*}
As the above should be zero for all options, this implies that $c_0 = 2*\mathds{1}[\hY_D(x,a) = \hy^*(x,a)]-1$.

We now show that,
$$\E[L(\hy^*(x,a), Y_d)\mid x,a] = L(\hy^*(x,a), \hy^m(x,a)) + c_2(x,a)\E[L(\hy^m(x,a), \hY)\mid x,a]~.$$
We have that if $\hy^m(x,a) \neq \hy^*(x,a)$,
\begin{align*}
\E[L(\hy^*(x,a), \hY_D)\mid x,a] & = p(\hy^*(x,a) \neq \hY_D \mid x,a) = 1-p(\hy^*(x,a) = \hY_D \mid x,a) \\
& = 1-p(\hy^m(x,a) = \hY_D \mid x,a) = 1-\E[L(\hy^m(x,a), \hY_D) \mid x,a] \\
& = L(\hy^*(x,a), \hy^m(x,a))-\E[L(\hy^m(x,a), \hY_D) \mid x,a] \\
& =  L(\hy^*(x,a), \hy^m(x,a)) + c_2(x,a) \E[L(\hy^m(x,a), \hY_D) \mid x,a]~.
\end{align*}
A similar calculation for the case where $\hy^m(x,a) = \hy^*(x,a)$ yields the claim.

Finally, We have that 
\begin{align*}
\E[L(y, \hY_D)] &= \E[L(\hy^*(x,a), \hY_D) + c_0(x,a)L(y, \hy^*(x,a)) \mid x,a] \\
& = \E[L(\hy^*(x,a), \hY_D) \mid x,a] + \E[c_0(x,a) \mid x,a] L(y, \hy^*(x,a)) \\
& = L(\hy^*(x,a), \hy^m(x,a)) + c_2(x,a) \E[L(\hy^m(x,a), \hY_D) \mid x,a] \\
& + \E[c_0(x,a) \mid x,a] L(y, \hy^*(x,a)) 
\end{align*}
which gives us our result. 
\end{proof}

Since datasets are drawn independently of the protected attribute $A$, 
\begin{align*}
\olgamma_a(\hY) & = \E_D[\E_{X,Y}[L(Y,\hY_D)\mid D, A=a] \mid A=a] \\
& = \E_X[\E_{D,Y}[L(Y,\hY_D)\mid X, A=a]\mid A = a] \\
& = \E_X[B(\hat{Y},X,a) + c_2(X,a)V(\hat{Y},X,a) + c_1(X,a)N(X,a) \mid A=a]~,
\end{align*}
and an analogous results hold for class-specific losses, 
Theorem~1 follows from lemmas~\ref{lem:appzo2}--\ref{lem:appclass}.

\section{Difference between power law curves}
\label{app:pow3}
Let $f(x) = ax^{-b}+c$ and $g(x) = dx^{-e}+h$. Then $d(x) = f(x) - g(x)$ has at most 2 local minima. We see this by re-writing $d(x)$
$$
d(x) = ax^{-b}+\tilde{c} - dx^{-e}
$$
and so
$$
d'(x) = (-b)ax^{-b-1}+dex^{-e-1}
$$
Setting the derivative to zero,
$$
(-b)ax^{-b-1}+dex^{-e-1} = 0
$$
$$
x^{b-e} = \frac{ba}{de}
$$
which has a unique positive root
$$
x = (\frac{ba}{de})^{\frac{1}{b-e}}~.
$$
Since $f(x)$ has a single critical point (for $x>0$), $f(x)$ can switch signs at most twice. The curves $f(x) = \frac{100}{x^2}+1$ and $g(x) = \frac{50}{x}$ intersect twice on $x \in [0,\infty]$. If $b=e$, $d(x)$ has a single zero,
$$
d(x) = (a-d)x^{-b}+\tilde{c} = 0
$$
yields
$$
x = (\frac{\tilde{c}}{d-a})^{\frac{1}{-b}}~.
$$

\end{document}